\title{\papertitle}
\author{Hisham Husain$^{\dagger}$
        \qquad
        Kamil Ciosek$^{\ddagger}$
        \qquad
        Ryota Tomioka$^{\ddagger}$\\\\
        $^{\dagger}$The Australian National University $\&$ Data61\\
        $^{\ddagger}$Microsoft Research Cambridge
        }
\begin{document}

\date{}

\maketitle

\begin{abstract}
Entropic regularization of policies in Reinforcement Learning (RL) is a commonly used heuristic to ensure that the learned policy explores the state-space sufficiently before overfitting to a local optimal policy. The primary motivation for using entropy is for exploration and disambiguating optimal policies; however, the theoretical effects are not entirely understood. In this work, we study the more general regularized RL objective and using Fenchel duality; we derive the dual problem which takes the form of an adversarial reward problem. In particular, we find that the optimal policy found by a regularized objective is precisely an optimal policy of a reinforcement learning problem under a worst-case adversarial reward. Our result allows us to reinterpret the popular entropic regularization scheme as a form of robustification. Furthermore, due to the generality of our results, we apply to other existing regularization schemes. Our results thus give insights into the effects of regularization of policies and deepen our understanding of exploration through robust rewards at large.
\end{abstract}

\section{Introduction}
Reinforcement Learning (RL) is a paradigm of algorithms which learn policies that maximize the expected discounted reward specified by a Markov Decision Process (MDP) \citep{sutton2018reinforcement}. The formulation of an MDP is well-posed with links in utility theory \citep{russell2002artificial} and specifies a reward function where the solution can be found precisely in a deterministic form. However, in practice, the reward function is typically an idealization, and it turns out that an optimal policy in this model will cope terribly when presented to unseen or uncertain situations. Intuitively, it is anticipated that there exist multiple policies that are near-optimal to this reward yet exhibit more robust and diversified behaviour. In particular, having multiple solutions of this form would be preferred since they can help the practitioner in understanding the environment and problem better. 
\begin{figure*}
    \centering
    \includegraphics[scale=0.78]{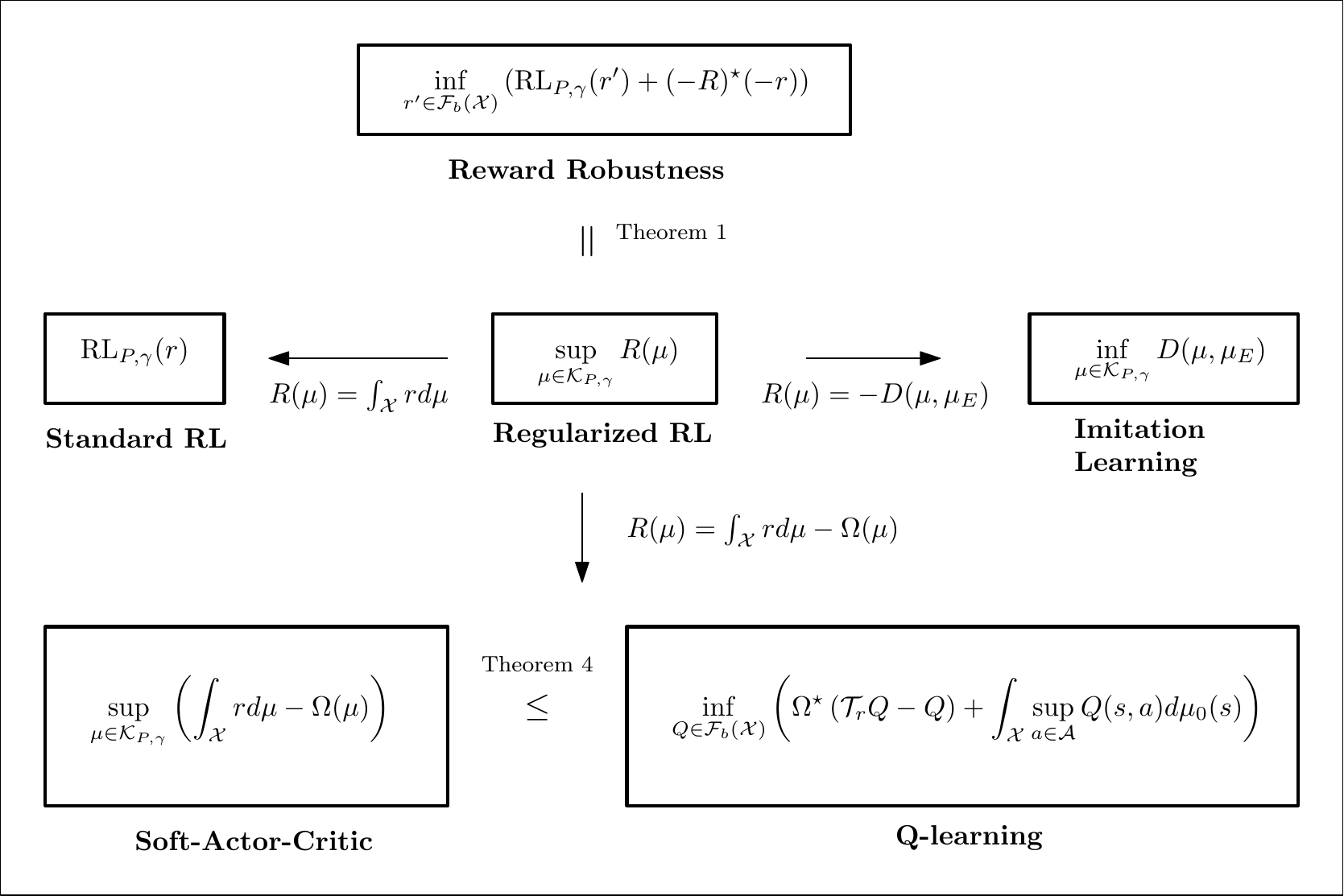}
    \caption{Our main is to provide a unified view of existing objectives in Reinforcement Learning and relate them to a reward robustness problem as highlighted above through Theorem 1.  Additionally, we show another link between regularized policies and Q-learning in Theorem 4.}
    \label{fig:contributions}
\end{figure*}
Finding near-optimal policies in this sense requires balancing between ensuring that the policy is optimal for the given reward and demonstrates some form of robustness or diversity. This is commonly recollected as the \textit{exploration} vs \textit{exploitation} trade-off. One of the most effective ways in ensuring this balance is by altering the objective of the MDP to include a form of penalty so that the resulting policy reflects characteristics of diversified behaviour. Causal entropy \citep{ziebart2010modeling} is a popular example of this, where the policy is penalized for being deterministic in favour of exploration and disambiguating optimal policies. This has lead to the MaxEnt framework \citep{haarnoja2018soft} and shown compelling relations to probabilistic inference  \citep{dayan1997using, neumann2011variational, todorov2007linearly, kappen2005path, toussaint2009robot, rawlik2013stochastic, theodorou2010generalized, ziebart2010modeling} whilst maintaining empirically superior performance on several tasks \citep{haarnoja2018soft, haarnoja2018composable}, including robustness in the face of uncertainty \citep{haarnoja2018learning}. In the case where the reward function is not specified, the entropy alone as an objective is also prevalent to ensure exploration \citep{hazan2019provably}. Similar forms of regularization have appeared in \cite{wu2019behavior}, which ensure that the policy is stabilized in accordance with a pre-determined behaviour and other forms of diversifying schemes using policy regularization have been developed in \citep{hong2018diversity}. Furthermore, the benefits of regularizers have also been observed in adversarial imitation learning methods \citep{ho2016generative,li2017infogail}.

While the empirical success should rejoice, it is somewhat unsettling that changing the objective deviates from the MDP set-up, which was initially motivated through the axioms of utility theory \citep{russell2002artificial}. In particular, it is not clear what kind of policy these regularized objectives are learning from the perspective of the original reward maximization problems, especially since it is apparent that regularized policies pose successfully in these schemes. On this front, there exists work that shows entropic regularization smoothens the optimization landspace \citep{ahmed2019understanding} and induces sparse policies when considering a larger class of policy regularizers \citep{yang2019regularized}. While these works advocate the effects of policy regularization, the benefits of regularization from an accuracy or robustness perspective and not very well understood. This is especially relevant since in machine learning more generally, regularization has shown strong links to generalization and robustness \citep{duchi2016statistics,sinha2017certifiable,husain2020distributional}. The first attempt is \citep{eysenbach2019if}, which shows that MaxEnt performs explicitly well on a robust reward problem. This approach however, is limited to only the MaxEnt and cannot apply to other schemes such as regularized imitation learning.

In this work, we tackle this precisely and focus on the problem specified by finding a policy that maximizes an objective $R$ that is concave in the space of state-action visitation distributions. This objective includes the standard reward objective and subsumes other popular objectives such as the MaxEnt framework and imitation learning. Our main insight is that the policy learned using a concave objective $R$ is \textit{robust} against rewards chosen by an adversary, where $ R$ determines the nature of the adversary. We find that the policy is precisely a maximizer against the worst-case reward $r'$. Moreover, we characterize the analytic form of $r'$ (using a technical assumption on $R$), which delivers more insight onto the nature of robustness. Our results thus allow us to reinterpret entropic regularization and exploration more generally as a robustifying mechanism and add to the advocation for using such methods in practice. In summary, our contributions are 
\begin{enumerate}
    \item A duality result linking generalized RL objectives as adversarial reward problems, which allows us to reinterpret the extant MaxEnt framework, among others, as a robustifying mechanism.
    \item Characterization of the adversarial reward solved by these regularized policy objectives. In doing so, we derive a generalized value function interpretation of entropic regularization.
    \item A primal-dual link between the regularized policy objective and Q-learning loss. This allows us to reinterpret the mean-squared error Q-learning as a form regularization of policies and robustification against rewards in light of our main result.
    \item Deriving the robust-reward problem for other popular frameworks such as imitation learning and model-free entropic optimization. This allows us to compare and unify these separate problems under reward-robustness. We illustrate this diagrammatically in Figure \ref{fig:contributions} 
\end{enumerate}
\section{Preliminaries}
\paragraph{Reinforcement Learning}
We use a compact set $\mathcal{S}$ to denote the state space, $\mathcal{A}$ the action space and set $\mathcal{X} = \mathcal{S} \times \mathcal{A}$. We assume these spaces are Polish and furthermore use $\mathscr{P}(\mathcal{S})$, $\mathscr{P}(\mathcal{A})$ and $\mathscr{P}(\mathcal{X})$ to denote the set of Borel probability measures. Similarly, we use $\mathcal{F}_b(\mathcal{S})$, $\mathcal{F}_b(\mathcal{A})$ and $\mathcal{F}_b(\mathcal{X})$ to denote the set of bounded and measurable functions on the sets $\mathcal{S}, \mathcal{A}$ and $\mathcal{X}$ respectively. A reward function is a mapping $r: \mathcal{X} \to \mathbb{R}$, a transition kernel is specified as $P: \mathcal{X} \to \mathscr{P}(\mathcal{S})$ and a policy is a mapping $\pi: \mathcal{S} \to \mathscr{P}(\mathcal{A})$. Let $\gamma > 0$ be an implicit fixed discount parameter. It can be shown that each $\mathcal{S}$, $\mathcal{A}$, $P$, initial distribution $\mu_0$ and policy $\pi$ uniquely define a Markov chain $\braces{(S_t,A_t)}_{t=1}^{\infty} \subseteq \mathcal{X}$. We denote the underlying probability space as $(\mathcal{X}, \mathscr{T}, P_{\mu_0, \pi})$ where $P_{\mu_0, \pi} \in \mathscr{P}(\mathcal{X})$ is referred to as the state-action visitation distribution. We refer the reader to \citep[Chapter~3]{meyn2012markov} and \citep[Chapter~2]{revuz2008markov} for more detailed constructions. The goal in RL is to find a policy that maximizes expected return over the state-action pairs visited, which can be concretely summarized in the optimization problem:
\begin{align}
    \sup_{\pi: \mathcal{S} \to \mathscr{P}(\mathcal{A})} \E_{P_{\mu_0, \pi}(s,a)} \left[ r(s,a) \right]. \label{RL-def}
\end{align}
This objective is linear in the space of state-action visitation distributions and thus is equivalent to the linear program $\max_{\mu \in \mathcal{K}_{P,\gamma}}\int_{\mathcal{X}} r(s,a) d\mu(s,a)$ where
\begin{align*}
    \mathcal{K}_{P,\gamma} = \Bigg\{ \Bigg. \mu \in \mathscr{P}(\mathcal{X}) : &\int_{\mathcal{A}} \mu(s,a) da = (1 - \gamma) \mu_0(s) + \gamma \int_{\mathcal{X}} P(s \mid s',a') d\mu(s',a') \Bigg. \Bigg\}.
\end{align*}
In particular, for any policy $\pi$, we have that $P_{\mu_0, \pi} \in \mathcal{K}_{P,\gamma}$ and that for any element $\mu \in \mathcal{K}_{P,\gamma}$, we can construct the corresponding policy $\pi_{\mu}(s) = \mu(s,a) / \int_{\mathcal{A}} \mu(s,a) da$. We introduce notation to formally write this since it will serve useful for the remainder of the paper.
\begin{definition}
For a reward function $r: \mathcal{X} \to \mathbb{R}$, we define
\begin{align*}
    &\operatorname{RL}_{P, \gamma}(r) := \sup_{\mu \in \mathcal{K}_{P,\gamma}} \int_{\mathcal{X}} r(s,a) d\mu(s,a)\\
    &M_{P,\gamma}(r) := \argsup_{\mu \in \mathcal{K}_{P,\gamma}} \int_{\mathcal{X}} r(s,a) d\mu(s,a)
\end{align*}
\end{definition}
In the above, $\operatorname{RL}_{P,\gamma}(r)$ is the same as \eqref{RL-def} and represents the maximum expected reward possible under an environment $P$, discount factor $\gamma$ and reward function $r$. The set $M_{P,\gamma}(r) \subseteq \mathscr{P}(\mathcal{X})$ represent the solutions that achieve maximal expected reward. 

\paragraph{Convex Analysis and Legendre-Fenchel Duality}
We use $\mathscr{B}(\mathcal{X})$ to denote the set of finitely-additive measures and denote its topological dual to be $\mathcal{F}_b(\mathcal{X})$, the set of measurable and bounded functions mapping from $\mathcal{X}$ to $\mathbb{R}$. For any functional $F: \mathscr{B}(\mathcal{X}) \to \mathbb{R}$, we define the Legendre-Fenchel dual, for any $h \in \mathcal{F}_b(\mathcal{X})$ as
\begin{align*}
    F^{\star}(h) = \sup_{\mu \in \mathscr{B}(\mathcal{X})} \bracket{ \int_{\mathcal{X}}h(x) d\mu(x) - F(\mu) }.
\end{align*}
For a set of functions $\mathcal{F} \subseteq \mathcal{F}_b(\mathcal{X})$, we use $\iota_{\mathcal{F}}(h)$ to denote the convex indicator function defined which is $0$ if $h \in \mathcal{F}$ and $+\infty$ otherwise. For any two measures $\mu,\nu \in \mathscr{B}(\mathcal{X})$, we define the $f$-divergence between $\mu$ and $\nu$ to be $D_f(\mu,\nu) = \int_{\mathcal{X}}f(d\mu / d\nu) d\nu - \int_{\mathcal{X}} d\nu + 1$ where $f:\mathbb{R} \to (-\infty, \infty]$ is a lower semicontinuous convex function with $f(1) = 0$. In particular, the setting of $f(t) = t \log t$ is the popular Kullback-Leiber divergence, which we denote by  $\operatorname{KL}(\mu,\nu) = D_f(\mu,\nu)$.
\section{Related Work}
Our main contribution is a reinterpretation of regularized policy maximization as robustifying mechanisms and so we discuss developments at understanding these methods along with similar results existing in machine learning at large. The idea of using causal entropy \citep{ziebart2010modeling} is guided by the intuition of encouraging curious and diversified behavior. Further developed in \citep{haarnoja2018soft}, empirical success of using this penalty has been apparent. In particular, regularized policies unlike standard policies have illustrated robust behavior in the face of uncertainty and diversified behavior in finite sample schemes. Despite the empirical success, there is not much work studying these benefits from a formal perspective. The main existing results show that regularized objectives include smoothen the optimization landscape \citep{ahmed2019understanding} and yield sparse policies \citep{yang2019regularized}. \citep{eysenbach2019if} focuses on the MaxEnt framework and relates the optimal policy to solving a variable reward problem, which is line with our findings. Their results in contrast to ours, cannot be applied to other policy regularizers or other schemes that use causal entropy in the absence of reward functions such as adversarial imitation learning \citep{li2017infogail}.

In the realm of machine learning more generally, regularization has been principally established as a robustifying strategy. In supervised learning, various forms of robustness have shown connections to a number of regularization penalties such as Lipschitzness \citep{blanchet2019quantifying, sinha2017certifiable, cranko2020generalised, husain2020distributional}, variance \citep{duchi2016statistics} and Hilbert space norms \citep{staib2019distributionally}. In Optimal Transport (OT), it has also been shown that entropic regularization is linked to ground cost robustness \citep{paty2020regularized}. Our result thus extends and develops these narratives for RL. \citep{zhang2020variational} also uses technical tools similar to our work such as Fenchel duality however for their purposes and findings are for quite different purposes. 
\section{Reward Robust Reinforcement Learning}
We will be focusing on the problem specified by 
\begin{align*}
    \sup_{\mu \in \mathcal{K}_{P,\gamma}} R(\mu),
\end{align*}
where $R: \mathscr{B}(\mathcal{X}) \to \mathbb{R}$ is a concave upper semicontinuous function. Note that when a reward function $r: \mathcal{X} \to \mathbb{R}$ is given, setting $R(\mu) = \int_{\mathcal{X}} r(x) d\mu(x)$ recovers the standard maximum expected reward problem. Furthermore, the above subsumes other developments of RL in the case where the reward is unknown and $R$ is chosen to be the entropy \citep{hazan2019provably} or imitation learning when $R(\mu) = -D(\mu,\mu_E)$ where $\mu_E$ is some expert demonstration and $D$ is a divergence between probability measures \citep{ghasemipour2019divergence}. We present the main result which shows the above as a reward robust RL problem.
\begin{theorem}\label{value-theorem}
For any concave upper semicontinuous function $R: \mathscr{B}(\mathcal{X}) \to \mathbb{R}$, we have
\begin{align*}
    \sup_{\mu \in \mathcal{K}_{P,\gamma}} R(\mu) = \inf_{r' \in \mathcal{F}_b(\mathcal{X})}\bracket{\operatorname{RL}_{P,\gamma}\bracket{r'} + (-R)^{\star}\bracket{-r'}  }
\end{align*}
\end{theorem}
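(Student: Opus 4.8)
The plan is to recognize the right-hand side as the Fenchel--Rockafellar dual of the constrained concave maximization on the left, the two being glued together by the fact that $\operatorname{RL}_{P,\gamma}$ is itself a conjugate. First I would encode the membership constraint with the convex indicator $\iota_{\mathcal{K}_{P,\gamma}}$, writing
\[
  \sup_{\mu \in \mathcal{K}_{P,\gamma}} R(\mu)
  = -\inf_{\mu \in \mathscr{B}(\mathcal{X})} \left[ (-R)(\mu) + \iota_{\mathcal{K}_{P,\gamma}}(\mu) \right],
\]
where $-R$ is proper, convex and lower semicontinuous (being the negation of a concave upper semicontinuous function) and $\iota_{\mathcal{K}_{P,\gamma}}$ is convex and lower semicontinuous because $\mathcal{K}_{P,\gamma}$ is convex and closed. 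The key observation is that, directly from its definition, $\operatorname{RL}_{P,\gamma}$ is the support function of $\mathcal{K}_{P,\gamma}$, i.e. the Legendre--Fenchel conjugate of the indicator:
\[
  \iota_{\mathcal{K}_{P,\gamma}}^{\star}(h)
  = \sup_{\mu \in \mathcal{K}_{P,\gamma}} \int_{\mathcal{X}} h \, d\mu
  = \operatorname{RL}_{P,\gamma}(h), \qquad h \in \mathcal{F}_b(\mathcal{X}).
\]

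Next I would apply the Fenchel--Rockafellar duality theorem in the dual pairing $(\mathscr{B}(\mathcal{X}), \mathcal{F}_b(\mathcal{X}))$ to $f = -R$ and $g = \iota_{\mathcal{K}_{P,\gamma}}$, which yields $\inf_{\mu} [ f(\mu) + g(\mu) ] = \sup_{h} [ -f^{\star}(h) - g^{\star}(-h) ]$. Multiplying through by $-1$, inserting $f^{\star} = (-R)^{\star}$ and $g^{\star} = \operatorname{RL}_{P,\gamma}$, and finally performing the change of variable $r' = -h$ produces exactly
\[
  \sup_{\mu \in \mathcal{K}_{P,\gamma}} R(\mu)
  = \inf_{r' \in \mathcal{F}_b(\mathcal{X})} \left[ \operatorname{RL}_{P,\gamma}(r') + (-R)^{\star}(-r') \right],
\]
which is the claim. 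An equivalent route, should the duality theorem be inconvenient to invoke, is to expand $R$ by Fenchel--Moreau biconjugation as $R(\mu) = \inf_{h} [ (-R)^{\star}(h) - \int h \, d\mu ]$ and then exchange the outer $\sup_{\mu \in \mathcal{K}_{P,\gamma}}$ with this $\inf_{h}$ via a Sion-type minimax theorem; both routes hinge on the same analytic fact.

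The easy half is weak duality: the inequality $\leq$ follows termwise from the Fenchel--Young inequality, since for every $\mu \in \mathcal{K}_{P,\gamma}$ and every $r'$ one has $R(\mu) \le \int r' \, d\mu + (-R)^{\star}(-r') \le \operatorname{RL}_{P,\gamma}(r') + (-R)^{\star}(-r')$. The substance therefore lies entirely in strong duality (the reverse inequality, or equivalently the admissibility of the sup--inf exchange), and this is the step I expect to be the main obstacle, because in the infinite-dimensional setting of finitely additive measures paired with $\mathcal{F}_b(\mathcal{X})$ one cannot quote the finite-dimensional statement verbatim. I would close the gap by verifying a constraint qualification --- either continuity of $-R$ at a point of $\mathcal{K}_{P,\gamma}$, or an Attouch--Brézis/closedness condition on the difference of the effective domains. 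Here the hypotheses do precisely the needed work: since $R$ is real-valued on all of $\mathscr{B}(\mathcal{X})$, the effective domain of $-R$ is the entire space, so $\operatorname{dom}(-R) - \mathcal{K}_{P,\gamma} = \mathscr{B}(\mathcal{X})$ and the qualification holds as soon as $\mathcal{K}_{P,\gamma} \neq \emptyset$ --- which it is, since it contains every $P_{\mu_0,\pi}$. The remaining technical point to confirm is that $\mathcal{K}_{P,\gamma}$ is closed in the relevant (weak-$\star$) topology, so that $\iota_{\mathcal{K}_{P,\gamma}}$ is lower semicontinuous and its biconjugate is itself; this follows from the fact that the defining constraints of $\mathcal{K}_{P,\gamma}$ are affine.
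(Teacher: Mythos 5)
Your overall architecture is sound and genuinely different from the paper's: the paper proves this theorem by Fenchel--Moreau biconjugation of $-R$ (its Theorem 5, citing Z\u{a}linescu) followed by a sup--inf exchange via Ky Fan's minimax theorem, leaning on compactness of $\mathcal{K}_{P,\gamma}$ and linearity of $\mu \mapsto \int_{\mathcal{X}} r'\, d\mu$; your primary route packages the same ingredients into a single Fenchel--Rockafellar application, built on the nice observation (never made explicit in the paper) that $\operatorname{RL}_{P,\gamma} = \iota_{\mathcal{K}_{P,\gamma}}^{\star}$ is the support function of $\mathcal{K}_{P,\gamma}$. Your weak-duality half is correct, you correctly locate the entire difficulty in strong duality, and your fallback route (biconjugation plus a minimax exchange) is precisely the paper's proof.

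However, the specific way you close the strong-duality gap does not work as stated. Attouch--Br\'ezis is a Banach-space theorem: applied with $f = -R$, $g = \iota_{\mathcal{K}_{P,\gamma}}$ on $\mathscr{B}(\mathcal{X})$ equipped with the total-variation norm, it produces a dual supremum ranging over the norm dual $\mathscr{B}(\mathcal{X})^{*}$, which strictly contains $\mathcal{F}_b(\mathcal{X})$, so you obtain equality with an infimum over a larger space of ``rewards'' --- a priori strictly smaller than the infimum over $\mathcal{F}_b(\mathcal{X})$ claimed in the theorem. If instead you insist on the dual pair $(\mathscr{B}(\mathcal{X}), \mathcal{F}_b(\mathcal{X}))$ with compatible topologies, the continuity-type qualification would require $-R$ to be continuous at a point of $\mathcal{K}_{P,\gamma}$ in the Mackey topology $\tau(\mathscr{B}(\mathcal{X}), \mathcal{F}_b(\mathcal{X}))$, and mere finiteness of $R$ on $\mathscr{B}(\mathcal{X})$ does not deliver this (finite convex functions need not be continuous in weak-type topologies, and ``$\operatorname{dom}(-R)$ is everything'' is not a substitute outside the Banach setting). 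The clean repair is to flip which side is primal: work on the Banach space $\mathcal{F}_b(\mathcal{X})$ with the sup norm, whose dual is $\mathscr{B}(\mathcal{X})$, set $F(r') = (-R)^{\star}(-r')$ and $G = \operatorname{RL}_{P,\gamma}$, and note that $G$ is finite-valued and $1$-Lipschitz because every $\mu \in \mathcal{K}_{P,\gamma}$ is a probability measure --- so the classical continuity qualification holds at every point. Fenchel duality then gives
\begin{align*}
\inf_{r' \in \mathcal{F}_b(\mathcal{X})} \bracket{F(r') + G(r')} = \max_{\mu \in \mathscr{B}(\mathcal{X})} \bracket{-F^{\star}(\mu) - G^{\star}(-\mu)},
\end{align*}
and computing $F^{\star}(\mu) = (-R)^{\star\star}(-\mu) = -R(-\mu)$ and $G^{\star} = \iota_{\mathcal{K}_{P,\gamma}}$ recovers exactly $\sup_{\mu \in \mathcal{K}_{P,\gamma}} R(\mu)$, with attainment of the optimal $\mu$ thrown in for free (which is then useful for Theorem 2). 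Note that this repaired argument still requires $-R$ to be lower semicontinuous and $\mathcal{K}_{P,\gamma}$ to be closed with respect to $\sigma(\mathscr{B}(\mathcal{X}), \mathcal{F}_b(\mathcal{X}))$ --- the same implicit hypotheses the paper relies on when it invokes Fenchel--Moreau and asserts compactness of $\mathcal{K}_{P,\gamma}$ in the Ky Fan step. Once fixed, your route trades the paper's compactness assumption for a continuity qualification and gains primal attainment; the paper's route avoids the dual-space bookkeeping that tripped you up here.
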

\begin{proof}\textbf{(Sketch)} 
The key part of the proof is to rewrite $R$ in terms of the convex conjugate of $-R$, which is well-defined since $-R$ is lower semicontinuous and convex, by assumptions on $R$. The proof then concludes by moving the supremum over $\mu$ inside by an application of a generalized minimax theorem.
\end{proof}
The key point from the above is that the value of the maximal policy over $R$ is exactly equal to the problem of finding an adversarial reward. In particular, the adversarial reward problem seeks to find a reward $r'$ that makes the maximally achievable reward $\operatorname{RL}_{P,\gamma}$ as small as possible while paying the penalty $(-R)^{\star}(-r')$, where $(-R)^{\star}$ is a convex function. We present now a result linking the optimal $\mu$ and adversarial reward $r'$ above which allows us to give concrete insight.
\begin{theorem}\label{variable-theorem}
Let $\mu^{*}$ and $r^{*}$ be the optimal solution to the problems specified in Theorem \ref{value-theorem}, then we have that $\mu^{*} \in M_{P,\gamma}\bracket{r^{*}}$.
\end{theorem}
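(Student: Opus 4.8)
The plan is to exploit the value equality established in Theorem~\ref{value-theorem} through an elementary sandwiching argument, without re-running the minimax machinery. First I would denote the common optimal value by
\[
V := \sup_{\mu \in \mathcal{K}_{P,\gamma}} R(\mu) = \inf_{r' \in \mathcal{F}_b(\mathcal{X})}\left[\operatorname{RL}_{P,\gamma}(r') + (-R)^{\star}(-r')\right],
\]
and record the two defining optimality relations for the assumed optima: $R(\mu^{*}) = V$ and $\operatorname{RL}_{P,\gamma}(r^{*}) + (-R)^{\star}(-r^{*}) = V$.

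The central tool is the biconjugate representation of $R$ already used in proving Theorem~\ref{value-theorem}: since $-R$ is convex and lower semicontinuous, Fenchel--Moreau gives, for every admissible $\mu$,
\[
R(\mu) = \inf_{r' \in \mathcal{F}_b(\mathcal{X})}\left[\int_{\mathcal{X}} r'\, d\mu + (-R)^{\star}(-r')\right].
\]
Evaluating this at $\mu = \mu^{*}$ and bounding the infimum by the single choice $r' = r^{*}$ yields $V = R(\mu^{*}) \le \int_{\mathcal{X}} r^{*}\, d\mu^{*} + (-R)^{\star}(-r^{*})$. For the reverse direction I would use that $\mu^{*} \in \mathcal{K}_{P,\gamma}$, so the definition of $\operatorname{RL}_{P,\gamma}$ forces $\int_{\mathcal{X}} r^{*}\, d\mu^{*} \le \operatorname{RL}_{P,\gamma}(r^{*})$, whence
\[
\int_{\mathcal{X}} r^{*}\, d\mu^{*} + (-R)^{\star}(-r^{*}) \le \operatorname{RL}_{P,\gamma}(r^{*}) + (-R)^{\star}(-r^{*}) = V.
\]
Chaining the two displays sandwiches $\int_{\mathcal{X}} r^{*}\, d\mu^{*} + (-R)^{\star}(-r^{*})$ between $V$ and $V$, so every inequality is an equality. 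In particular $\int_{\mathcal{X}} r^{*}\, d\mu^{*} = \operatorname{RL}_{P,\gamma}(r^{*}) = \sup_{\mu \in \mathcal{K}_{P,\gamma}} \int_{\mathcal{X}} r^{*}\, d\mu$, which is exactly the assertion that $\mu^{*}$ attains the supremum defining $\operatorname{RL}_{P,\gamma}(r^{*})$, i.e. $\mu^{*} \in M_{P,\gamma}(r^{*})$. Conceptually this is just the statement that the optimal pair $(\mu^{*}, r^{*})$ is a saddle point of $\Phi(\mu, r') := \int_{\mathcal{X}} r'\, d\mu + (-R)^{\star}(-r')$, and reading off the $\mu$-optimality with $(-R)^{\star}(-r^{*})$ treated as a constant gives the claim; the sandwiching argument is simply the self-contained way to extract that optimality from the value equality.

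The hard part will be the measure-theoretic bookkeeping rather than the algebra: I must justify that the Fenchel--Moreau identity holds as an exact equality at $\mu^{*}$ (not merely $R \le R^{\star\star}$), which is where the lower-semicontinuity and convexity hypotheses on $R$ carried over from Theorem~\ref{value-theorem} are used, and I must ensure $V$ is finite so that adding $(-R)^{\star}(-r^{*})$ to both sides of an inequality is legitimate and no $\infty - \infty$ arises. I would also make explicit at the outset that the optima $\mu^{*}$ and $r^{*}$ are attained; their existence is what the minimax equality of Theorem~\ref{value-theorem} supplies, and the present argument only converts that attainment into the membership $\mu^{*} \in M_{P,\gamma}(r^{*})$.
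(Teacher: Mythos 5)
Your proposal is correct and is essentially the paper's own argument: both proofs combine the value equality from Theorem~\ref{value-theorem}, the optimality of $\mu^{*}$ and $r^{*}$, the Fenchel--Young inequality for $-R$, and the trivial bound $\int_{\mathcal{X}} r^{*}\, d\mu^{*} \leq \operatorname{RL}_{P,\gamma}(r^{*})$ to force $\int_{\mathcal{X}} r^{*}\, d\mu^{*} = \operatorname{RL}_{P,\gamma}(r^{*})$, your sandwich around $V$ being the paper's telescoping chain in different packaging. One remark: the step you flag as delicate (exact Fenchel--Moreau equality at $\mu^{*}$) is not actually needed, since bounding the infimum by the single choice $r' = r^{*}$ only uses the one-sided Fenchel--Young inequality $R(\mu^{*}) \leq \int_{\mathcal{X}} r^{*}\, d\mu^{*} + (-R)^{\star}(-r^{*})$, which holds for any $R$ by definition of the conjugate.
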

This result tell us that an optimal policy found by solving the regularized objective is precisely an optimal policy of the Reinforcement Learning problem specified by the adversarial reward $r^{*}$. This is particularly striking since it tells us that though we are maximizing some concave $R$, which may be motivated for separate purposes, we can always guarantee that the policy learned is optimal for some reward $r'$ in the axiomatic utility theory sense. In particular, this reward $r^{*}$ is chosen to be the worst-case for this environment. The strength of robustness and nature of the adversarial reward clearly depends on the choice of $R$, as this is what budgets the adversarial reward $r'$. We will show that under a technical assumption on $R$, we can characterize the form $r^{*}$ takes, which happens to depend on a single state-dependent mapping $V \in \mathcal{F}_b(\mathcal{S})$. The particular technical assumption on $(-R)^{\star}$ is that it is \textit{increasing} by which we mean $r(x) \geq r'(x)$ for every $x \in \mathcal{X}$ implies $(-R)^{\star}(r) \geq (-R)^{\star}(r')$. We first introduce a result.
\begin{theorem}\label{generalized-valueproblem}
Suppose $R$ is concave upper semicontinuous and let $\mathscr{I}$ be the value of the optimization problem
\begin{align}
     &\inf_{V \in \mathcal{F}_b(\mathcal{S}), r \in \mathcal{F}_b(\mathcal{X})}\bracket{ (1 - \gamma) \int_{\mathcal{S}}V(s) d\mu_0(s) + (-R)^{\star}(-r) }, \label{eq:generalized-valueproblem}\\ &\operatorname{s.t. }V(s) \geq  r(s,a) + \gamma \int_{\mathcal{S}} V(s') dP(s' \mid s,a). \nonumber
\end{align}
It then holds that $\mathscr{I} = \sup_{\mu \in \mathcal{K}_{P,\gamma}} R(\mu)$.
\end{theorem}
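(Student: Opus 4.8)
The plan is to reduce the claim to Theorem~\ref{value-theorem} by recognizing that the inner quantity $\operatorname{RL}_{P,\gamma}(r)$ is itself the value of a linear program whose Lagrangian dual is a value-function optimization carrying precisely the Bellman inequality constraint appearing in \eqref{eq:generalized-valueproblem}. Concretely, I would first establish that for every $r \in \mathcal{F}_b(\mathcal{X})$,
\begin{align*}
    \operatorname{RL}_{P,\gamma}(r) = \inf_{V \in \mathcal{F}_b(\mathcal{S})} \left\{ (1-\gamma)\int_{\mathcal{S}} V(s)\, d\mu_0(s) \; : \; V(s) \geq r(s,a) + \gamma \int_{\mathcal{S}} V(s')\, dP(s'\mid s,a) \right\}.
\end{align*}
Granting this, the theorem is immediate: substituting the displayed identity for $\operatorname{RL}_{P,\gamma}(r)$ into the dual of Theorem~\ref{value-theorem} and merging the two nested infima (over $V$ and over $r$) into the single joint infimum of \eqref{eq:generalized-valueproblem} — which is valid since an infimum of infima is the joint infimum — gives $\mathscr{I} = \inf_{r}\bracket{\operatorname{RL}_{P,\gamma}(r) + (-R)^{\star}(-r)} = \sup_{\mu \in \mathcal{K}_{P,\gamma}} R(\mu)$.

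To derive the LP duality, I would form the Lagrangian of the primal program $\sup_{\mu \in \mathcal{K}_{P,\gamma}} \int_{\mathcal{X}} r\, d\mu$ by attaching a multiplier $V \in \mathcal{F}_b(\mathcal{S})$ to the flow constraint defining $\mathcal{K}_{P,\gamma}$ and relaxing the membership condition to $\mu \geq 0$. After exchanging the order of integration in the transition term, the Lagrangian rearranges into
\begin{align*}
    (1-\gamma)\int_{\mathcal{S}} V\, d\mu_0 + \int_{\mathcal{X}} \left[ r(s,a) + \gamma \int_{\mathcal{S}} V(s')\, dP(s'\mid s,a) - V(s) \right] d\mu(s,a).
\end{align*}
Taking the supremum over $\mu \geq 0$ then forces the bracketed integrand to be nonpositive everywhere (otherwise the value is $+\infty$), which is exactly the Bellman inequality, and leaves the objective $(1-\gamma)\int_{\mathcal{S}} V\, d\mu_0$. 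I would also record that imposing $\mu \in \mathscr{P}(\mathcal{X})$ separately is unnecessary: integrating the flow constraint over $\mathcal{S}$ and using $\int_{\mathcal{S}} dP(s'\mid s,a) = 1$ forces total mass one automatically, so the relaxation to $\mu \geq 0$ is harmless.

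The main obstacle is justifying \emph{strong} duality — equality rather than merely the weak inequality $\geq$ — in this infinite-dimensional program over Polish spaces with finitely-additive measures as the primal variable. I would obtain it through the same generalized minimax theorem already invoked for Theorem~\ref{value-theorem}, checking the required lower semicontinuity together with a suitable constraint qualification; the existence of a feasible visitation measure $\mu = P_{\mu_0,\pi} \in \mathcal{K}_{P,\gamma}$ for any policy $\pi$ supplies the needed Slater-type feasibility. Once the saddle-point exchange is licensed, the infimum over $V$ of the dual function equals $\operatorname{RL}_{P,\gamma}(r)$, and the proof closes. The remaining pieces — the integration-order rearrangement and the merger of infima — are routine.
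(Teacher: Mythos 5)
Your proposal is correct and takes essentially the same route as the paper: the paper's proof likewise combines Theorem~\ref{value-theorem} with the classical LP duality $\operatorname{RL}_{P,\gamma}(r) = (1-\gamma)\inf_{V \in \mathcal{V}_{P,r,\gamma}} \int_{\mathcal{S}} V(s)\, d\mu_0(s)$ and then merges the nested infima (written there via indicator functions $\iota_{\mathcal{V}_{P,r',\gamma}}$) into the joint program \eqref{eq:generalized-valueproblem}. The only difference is cosmetic: the paper cites the LP duality as classical (cf.\ \citep{agarwal2019reinforcement}) rather than re-deriving it through the Lagrangian/minimax argument you sketch.
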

It should be first noted that the above is a strong duality Theorem and indeed is a generalized version of the standard linear programming duality between policy maximization and value function minimization as described in \citep{agarwal2019reinforcement}, which is recovered when $R(\mu) = \int_{\mathcal{X}} r(x) d\mu(x)$ for some reward $r$. We will now show that the optimal value function of this objective gives the optimal reward. In particular, note that by solving the above constraint for the reward yields
\begin{align}\label{reward-valueshape}
   r_V(s,a) := V(s) - \gamma \cdot \int_{\mathcal{S}} V(s') dP(s' \mid s,a).
\end{align}
We then have the following result 
\begin{lemma}\label{lemma:optVandoptr}
Suppose $(-R)^{\star}$ is increasing and $V^{*}$ is the optimal solution of \eqref{eq:generalized-valueproblem} then $r_{V^{*}}$ is the optimal adversarial reward.
\end{lemma}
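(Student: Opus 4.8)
The plan is to bridge the generalized value problem \eqref{eq:generalized-valueproblem} and the adversarial-reward problem of Theorem \ref{value-theorem} by evaluating $\operatorname{RL}_{P,\gamma}$ explicitly on the special family of ``potential-shaped'' rewards $r_V$ from \eqref{reward-valueshape}, and then invoking the increasing hypothesis to show that the optimum of \eqref{eq:generalized-valueproblem} is attained along exactly this family. Throughout I write $\mu_{\mathcal{S}}(s) = \int_{\mathcal{A}} \mu(s,a)\,da$ for the state-marginal of $\mu$.

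The first and most load-bearing step is the identity that for every $V \in \mathcal{F}_b(\mathcal{S})$,
\begin{align*}
    \operatorname{RL}_{P,\gamma}(r_V) = (1-\gamma)\int_{\mathcal{S}} V(s)\, d\mu_0(s).
\end{align*}
I would prove the stronger fact that $\int_{\mathcal{X}} r_V\, d\mu$ is \emph{constant} over all $\mu \in \mathcal{K}_{P,\gamma}$. Expanding the definition gives $\int r_V\, d\mu = \int V\, d\mu_{\mathcal{S}} - \gamma \int_{\mathcal{X}}\int_{\mathcal{S}} V(s')\,dP(s'\mid s,a)\,d\mu(s,a)$; applying Fubini to the second term produces $\gamma \int_{\mathcal{S}} V(s')\,d\nu(s')$ with $\nu(s') = \int_{\mathcal{X}} P(s'\mid s,a)\,d\mu(s,a)$. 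The flow constraint defining $\mathcal{K}_{P,\gamma}$ is precisely $\mu_{\mathcal{S}} = (1-\gamma)\mu_0 + \gamma\nu$, so I can substitute $\gamma\nu = \mu_{\mathcal{S}} - (1-\gamma)\mu_0$; the two $\int V\, d\mu_{\mathcal{S}}$ terms then cancel and only $(1-\gamma)\int V\, d\mu_0$ remains. Since this quantity is independent of $\mu$, the supremum over $\mathcal{K}_{P,\gamma}$ equals it. (One also checks the routine point that $r_V \in \mathcal{F}_b(\mathcal{X})$, so it is a legitimate competitor in the Theorem \ref{value-theorem} infimum.)

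The second step uses the increasing assumption to collapse \eqref{eq:generalized-valueproblem} to a minimization over $V$ alone. Its constraint rearranges to the pointwise inequality $r(s,a) \le r_V(s,a)$, hence $-r \ge -r_V$; because $(-R)^{\star}$ is increasing, $(-R)^{\star}(-r) \ge (-R)^{\star}(-r_V)$, so for each fixed $V$ the objective is minimized by driving the constraint to equality, $r = r_V$. Substituting this and invoking the identity of Step~1, the value of \eqref{eq:generalized-valueproblem} becomes
\begin{align*}
    \mathscr{I} = \inf_{V \in \mathcal{F}_b(\mathcal{S})} \left[ \operatorname{RL}_{P,\gamma}(r_V) + (-R)^{\star}(-r_V) \right],
\end{align*}
which is the Theorem \ref{value-theorem} adversarial objective restricted to rewards of the form $r_V$. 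In particular, the optimal solution $V^{*}$ achieves $\mathscr{I} = \operatorname{RL}_{P,\gamma}(r_{V^{*}}) + (-R)^{\star}(-r_{V^{*}})$.

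Finally I would close the loop using the two strong-duality statements already in hand: Theorem \ref{generalized-valueproblem} gives $\mathscr{I} = \sup_{\mu \in \mathcal{K}_{P,\gamma}} R(\mu)$, and Theorem \ref{value-theorem} gives $\sup_{\mu} R(\mu) = \inf_{r' \in \mathcal{F}_b(\mathcal{X})}\left[\operatorname{RL}_{P,\gamma}(r') + (-R)^{\star}(-r')\right]$. Chaining these with the previous display shows $\operatorname{RL}_{P,\gamma}(r_{V^{*}}) + (-R)^{\star}(-r_{V^{*}})$ equals the adversarial infimum, so $r_{V^{*}}$ attains it and is therefore an optimal adversarial reward. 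The step I expect to be the genuine obstacle is the monotonicity reduction in Step~2: the adversarial infimum of Theorem \ref{value-theorem} ranges over \emph{all} of $\mathcal{F}_b(\mathcal{X})$, and the content of the lemma is that it is nevertheless attained on the much smaller potential-shaped family $\{r_V\}$. It is exactly the increasing hypothesis that forces the Bellman-type constraint to be tight at the optimum and thereby identifies $r_{V^{*}}$ with the minimizer; without it the reduction fails and $r_{V^{*}}$ need not be optimal.
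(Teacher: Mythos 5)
Your proposal is correct and follows essentially the same route as the paper's proof: you establish the same key identity that $\int_{\mathcal{X}} r_V\, d\mu = (1-\gamma)\int_{\mathcal{S}} V\, d\mu_0$ for all $\mu \in \mathcal{K}_{P,\gamma}$, use the increasing hypothesis on $(-R)^{\star}$ to tighten the constraint $r \le r_V$ to equality in \eqref{eq:generalized-valueproblem}, and chain Theorems \ref{value-theorem} and \ref{generalized-valueproblem} to conclude the restricted infimum over $\{r_V\}$ matches the full adversarial infimum. The only cosmetic difference is that the paper phrases the final step as a sandwich $\sup_{\mu} R(\mu) = \inf_V[\cdots] \ge \inf_{r'}[\cdots] = \sup_{\mu} R(\mu)$, whereas you chain the equalities directly through $\mathscr{I}$; the substance is identical.
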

The main consequence of the above Lemma is that it characterizes the shape of the adversarial reward chosen. In particular, it tells us that as long as as $R$ satisfies the technical assumption ($(-R)^{\star}$ is increasing), the adversarial reward will be of the form $r_V$ for some $V$. This is insightful since it tells us that the adversarial reward relates rewards between states through the dynamics of $P$. For example, note that if a particular state-action pair $(s,a)$ yields the same state $s$ then $r_V(s,a) = (1-\gamma)V(s)$. This technical condition on $R$ can be satisfied for any $R$ with a simple reparametrization, which we lay out in Lemma 1 in the supplementary material, and exploit when deriving $(-R)^{\star}$ for Soft-Actor-Critic. Moreover, we will show that the common choices of $R$ which are motivated for smoothing or other empirical benefits naturally satisfy this technical assumption.
\paragraph{Generalized Soft-Actor-Critic Regularization}  Consider the case of having an available reward and using a convex penalty $\Omega: \mathscr{B}(\mathcal{X}) \times \mathscr{B}(\mathcal{X}) \to \mathbb{R}$ for the policy so we select $R = R_{\Omega}$ of the form
\begin{align*}
    R_{\Omega}(\mu) = \int_{\mathcal{X}} r(s,a) d\mu(s,a) - \epsilon \cdot \Omega(\mu),
\end{align*}
for some $\epsilon > 0$. It can easily be shown (see Appendix) that $(-R)^{\star}(-r') = \epsilon \Omega^{\star}\bracket{\frac{r - r'}{\epsilon}}$, so that we have the following.
\begin{corollary}\label{value-corollary}
Let $\Omega: \mathscr{B}(\mathcal{X}) \to \mathbb{R}$ be a convex penalty then for any $\epsilon > 0$ we have
\begin{align*}
    \sup_{\mu \in \mathcal{K}_{P,\gamma}} R_{\Omega}(\mu) = \inf_{r' \in \mathcal{F}_b(\mathcal{X}) } \bracket{ \operatorname{RL}_{P,\gamma}\bracket{r'} + \epsilon \Omega^{\star}\bracket{\frac{r-r'}{\epsilon}} }.
\end{align*}
\end{corollary}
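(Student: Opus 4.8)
The plan is to derive the corollary directly from Theorem~\ref{value-theorem} applied to the choice $R = R_\Omega$, so that the only real work is to verify the hypotheses and to compute the conjugate $(-R_\Omega)^\star$ explicitly. First I would check that $R_\Omega$ satisfies the standing assumptions of Theorem~\ref{value-theorem}, namely that it is concave and upper semicontinuous. Concavity is immediate: the map $\mu \mapsto \int_{\mathcal{X}} r\, d\mu$ is linear, $\Omega$ is convex by assumption, and therefore $R_\Omega = \int r\, d\mu - \epsilon\,\Omega$ is the sum of a linear functional and a concave functional (using $\epsilon > 0$). Upper semicontinuity follows because the linear term is continuous in the relevant weak-$\star$ topology on $\mathscr{B}(\mathcal{X})$ and $-\epsilon\,\Omega$ is upper semicontinuous whenever $\Omega$ is lower semicontinuous. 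With these verified, Theorem~\ref{value-theorem} gives $\sup_{\mu \in \mathcal{K}_{P,\gamma}} R_\Omega(\mu) = \inf_{r' \in \mathcal{F}_b(\mathcal{X})}\bracket{\operatorname{RL}_{P,\gamma}(r') + (-R_\Omega)^\star(-r')}$.

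The remaining step is to identify $(-R_\Omega)^\star(-r')$ with $\epsilon\,\Omega^\star\!\bracket{(r-r')/\epsilon}$. Writing $-R_\Omega(\mu) = \epsilon\,\Omega(\mu) - \int_{\mathcal{X}} r\, d\mu$ and unfolding the definition of the Legendre--Fenchel dual, I would compute
\begin{align*}
    (-R_\Omega)^\star(h) = \sup_{\mu \in \mathscr{B}(\mathcal{X})}\bracket{\int_{\mathcal{X}} h\, d\mu + \int_{\mathcal{X}} r\, d\mu - \epsilon\,\Omega(\mu)} = \sup_{\mu \in \mathscr{B}(\mathcal{X})}\bracket{\int_{\mathcal{X}} (h+r)\, d\mu - \epsilon\,\Omega(\mu)}.
\end{align*}
This exhibits $(-R_\Omega)^\star$ as the conjugate of $\epsilon\,\Omega$ evaluated at the shifted argument $h + r$. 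Using the elementary scaling identity $(\epsilon\,\Omega)^\star(g) = \epsilon\,\Omega^\star(g/\epsilon)$, which follows by factoring $\epsilon$ out of the supremum, I would obtain $(-R_\Omega)^\star(h) = \epsilon\,\Omega^\star\!\bracket{(h+r)/\epsilon}$. Substituting $h = -r'$ then yields $(-R_\Omega)^\star(-r') = \epsilon\,\Omega^\star\!\bracket{(r-r')/\epsilon}$, and plugging this back into the expression from Theorem~\ref{value-theorem} completes the proof.

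The computation is almost entirely routine; I expect the only delicate point to be the bookkeeping around the shift-and-scale manipulation and, more substantively, ensuring that the interchange is legitimate over the space $\mathscr{B}(\mathcal{X})$ of finitely-additive measures paired against its dual $\mathcal{F}_b(\mathcal{X})$, so that all three functionals ($\mu\mapsto\int r\, d\mu$, $\Omega$, and their combination) have their conjugates taken over the same primal space with $r, r' \in \mathcal{F}_b(\mathcal{X})$. No minimax or compactness argument is needed beyond what Theorem~\ref{value-theorem} already supplies; the corollary is a clean specialization of that result combined with the shift-and-scale behaviour of the Fenchel conjugate.
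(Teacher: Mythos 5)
Your proposal is correct and follows essentially the same route as the paper: invoke Theorem~\ref{value-theorem} with $R = R_{\Omega}$, then compute $(-R_{\Omega})^{\star}(-r')$ by unfolding the conjugate, absorbing the linear reward term into the argument, and factoring out $\epsilon$ via the scaling identity $(\epsilon\,\Omega)^{\star}(g) = \epsilon\,\Omega^{\star}(g/\epsilon)$, exactly as in the paper's appendix computation. Your explicit verification that $R_{\Omega}$ is concave and upper semicontinuous is a small bonus the paper leaves implicit, but it does not change the argument.
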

The above tells us that the adversarial reward problem pays a price for deviating from the given reward $r$ due to the second term $\epsilon \Omega^{\star}\bracket{\frac{r-r'}{\epsilon}}$. In the Soft-Actor-Critic (SAC) method, this corresponds to selecting (upto some constant) $\Omega_{\operatorname{SAC}}(\mu) = \E_{\mu(s,a)}\left[\operatorname{KL}(\pi_{\mu}(\cdot \mid s),U)\right]$, where $\pi_{\mu}$ is the policy induced by $\mu$ and $U$ is the uniform distribution over $\mathcal{A}$. We presented Corollary \ref{value-corollary} with a general $\Omega$, which we believe will be useful for future developments. In this work, we consider the causal policy entropy along with 2-Tsallis entropy in the next next section. For the SAC case, we have the following result
\begin{lemma}[Soft-Actor-Critic]\label{SAC-conjugate}
For any $\epsilon > 0$ and $r,r' \in \mathcal{F}(\mathcal{X})$, we have
\begin{align*}
    \epsilon\Omega_{\operatorname{SAC}}^{\star}\bracket{\frac{r - r'}{\epsilon}} = \epsilon \cdot \sup_{s \in \mathcal{S}} \bracket{\int_{\mathcal{X}} \exp\bracket{\frac{r(s,a) - r'(s,a)}{\epsilon}} dU(a) - 1}
\end{align*}
\end{lemma}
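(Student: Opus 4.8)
The plan is to compute the Legendre--Fenchel conjugate of $\Omega_{\operatorname{SAC}}$ directly from the definition and then specialize to the argument $g := (r-r')/\epsilon$. First I would unfold the regularizer into a single integral over $\mathcal{X}$. Since $\operatorname{KL}(\pi_\mu(\cdot \mid s),U)$ depends on $\mu$ only through the induced policy $\pi_\mu(a\mid s)=\mu(s,a)/\mu_S(s)$, where $\mu_S(s)=\int_{\mathcal{A}}\mu(s,a)\,da$ is the state marginal, the expectation over $\mu$ absorbs the action integral and produces the conditional relative entropy $\Omega_{\operatorname{SAC}}(\mu)=\int_{\mathcal{X}}\mu(s,a)\log\frac{\pi_\mu(a\mid s)}{u(a)}\,d(s,a)$, with $u$ the density of $U$. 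This exhibits $\Omega_{\operatorname{SAC}}$ as a convex functional whose dependence factorizes through the pair consisting of the state marginal $\mu_S$ and the conditional policy $\pi_\mu$.

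Next I would write $\Omega_{\operatorname{SAC}}^{\star}(g)=\sup_{\mu}\left(\int_{\mathcal{X}} g\, d\mu - \Omega_{\operatorname{SAC}}(\mu)\right)$ and reparametrize the supremum over $\mu$ as a joint supremum over its state marginal $\rho$ and its conditional $\pi(\cdot\mid s)$, invoking the reparametrization of Lemma~1 in the supplement to place the problem in the form in which $(-R)^{\star}$ is increasing. The objective then splits as $\int_{\mathcal{S}}\rho(s)\left(\int_{\mathcal{A}} g(s,a)\,\pi(a\mid s)\,da - \operatorname{KL}(\pi(\cdot\mid s),U)\right)ds$, so the inner optimization over the conditional decouples across states. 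Solving the inner per-state problem is exactly the conjugate of the KL $f$-divergence against $U$; under the paper's $f$-divergence normalization the relevant generator conjugates to the exponential, so the per-state value equals $\int_{\mathcal{A}} e^{g(s,a)}\,dU(a) - 1$. The remaining outer problem is linear in $\rho$, and since $\rho$ ranges over probability measures on $\mathcal{S}$ the supremum concentrates on the worst state, yielding $\sup_{s\in\mathcal{S}}\left(\int_{\mathcal{A}} e^{g(s,a)}\,dU(a)-1\right)$. Multiplying by $\epsilon$ and substituting $g=(r-r')/\epsilon$ gives the claimed identity.

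The main obstacle is the careful handling of normalization and of the domain of the supremum. The induced policy $\pi_\mu(\cdot\mid s)$ is automatically a probability measure for each $s$, so a naive simplex-constrained per-state conjugate would produce the log-partition term $\log\int_{\mathcal{A}} e^{g}\,dU$ rather than the unnormalized form $\int_{\mathcal{A}} e^{g}\,dU - 1$; these two functions differ globally even though they agree to first order. Reconciling them --- that is, showing that the correct convention and the reparametrization make the conditional optimization an \emph{unconstrained} $f$-conjugate against $U$ while the state marginal, treated as a probability measure, simultaneously contributes the outer $\sup_{s}$ --- is the delicate step, since the unnormalized conditional and the normalized marginal are in tension. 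I would additionally need to justify the measure-theoretic interchange of the two suprema, the Donsker--Varadhan-type variational computation of the inner conjugate in the Polish setting, and the attainment of $\sup_{s}$ via measurability of $s\mapsto\int_{\mathcal{A}} e^{g(s,\cdot)}\,dU$ together with a standard selection or approximation argument on $\mathcal{S}$.
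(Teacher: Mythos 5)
Your decomposition --- factor $\mu$ into a state marginal and a conditional, decouple the inner problem across states, recognize it as a KL conjugate against $U$, and let the linear outer problem over the marginal concentrate on the worst state --- is exactly the route the paper takes. But your proof stalls precisely where the lemma lives: you observe that a simplex-constrained per-state problem yields the log-partition $\log\int_{\mathcal{A}}e^{g(s,a)}\,dU(a)$ while the claimed formula requires $\int_{\mathcal{A}}e^{g(s,a)}\,dU(a)-1$, you call reconciling the two ``the delicate step,'' and you leave it open. That is a genuine gap, not a technicality to be deferred: if $\mu$ is kept a probability measure, then $\pi_{\mu}(\cdot\mid s)\in\mathscr{P}(\mathcal{A})$ automatically, the inner conjugate really is the log-partition (and since $\log t\leq t-1$ these differ except at $\int e^{g}\,dU=1$), so the stated identity would simply fail on your formulation. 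The paper's resolution is a deliberate enlargement of the domain: it redefines $\Omega_{\operatorname{SAC}}$ to be finite on the set $\mathcal{B}_{\times}$ of measures $\mu(s,a)=\pi(a\mid s)\,\mu_S(s)$ with $\mu_S\in\mathscr{P}(\mathcal{S})$ but $\pi\in\mathcal{F}_b(\mathcal{S},\mathbb{R}^n)$ \emph{unnormalized} (and $+\infty$ off $\mathcal{B}_{\times}$), so that the conjugate's supremum legitimately ranges over unnormalized conditionals; the per-state problem is then the conjugate of the generalized ($f$-divergence) KL of the preliminaries, $D_f(\mu,\nu)=\int f(d\mu/d\nu)\,d\nu-\int d\nu+1$, which by \citep[Proposition~5]{feydy2019interpolating} equals $\int_{\mathcal{A}}e^{g}\,dU-1$ pointwise in $s$. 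This is exactly the convention contrast the paper itself flags in the entropic-exploration paragraph, where the probability-restricted KL would instead produce the log-partition.

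Two further pieces of your sketch need repair to match the paper's argument. First, once $\Omega_{\operatorname{SAC}}$ is extended to $\mathcal{B}_{\times}$, convexity is no longer the one-line observation your phrase ``exhibits $\Omega_{\operatorname{SAC}}$ as a convex functional'' suggests: the paper proves it by showing the Hessian of $F(\mathbf{x})=\sum_i x_i\log\bigl(x_i/\sum_j x_j\bigr)$ is positive semidefinite on $\mathbb{R}^n_{>0}$ via Cauchy--Schwarz, and separately that the set $\mathcal{B}_{\times}$ is convex (a mixture argument through the averaged marginal), neither of which your plan supplies. Second, your appeal to ``Lemma~1 in the supplement'' is misdirected: that lemma establishes that $(-R_{-})^{\star}$ is increasing and plays no role in computing $\Omega_{\operatorname{SAC}}^{\star}$; the tool you actually need for passing the supremum over conditionals inside the integral over states is the interchange theorem for decomposable spaces, \citep[Theorem~14.60]{rockafellar2009variational}, which is what the paper invokes, with the final $\sup_{s\in\mathcal{S}}$ coming from the outer linear program over $\mu_S\in\mathscr{P}(\mathcal{S})$ concentrating at the worst state, as you correctly anticipated.
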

If one reasons about how the adversary behaves, the first incentive is to make $\operatorname{RL}_{P,\gamma}(r')$ small by selecting very small rewards across the environment. However, we can see that for the case of entropic regularization, the adversary pays a big price for selecting $r'$ to be far from the original reward $r$ for any given state.  Note that in this case, we have $(-R)^{\star}$ is increasing and so in light of the concrete insight found in Lemma \ref{lemma:optVandoptr}, we are able to reason about the SAC policy maximizing a reward of the worst-case reward of the form \eqref{reward-valueshape}. This is striking since it tells us that the adversarial reward $r'$ will respect the environment dynamics across the action space even if the ground reward $r$ does not.

\paragraph{Derivation of Q-learning through robust learning}
In this subsection, we derive Q-learning through the reward-robust RL framework. In this context, learning a policy that is robust to a small variation in the reward corresponds to allowing a small violation of the Bellman equation with respect to the original reward function. For any Q-function $Q \in \mathcal{F}_b(\mathcal{X})$, we define the bellman operator $\mathcal{T}_r: \mathcal{F}_b(\mathcal{X}) \to \mathcal{F}_b(\mathcal{X})$ as
\begin{align*}
    \mathcal{T}_r Q(s,a) = r(s,a) + \gamma \int_{\mathcal{X}}\sup_{a' \in \mathcal{A}} Q(s',a') dP(s' \mid s,a)
\end{align*}
The maximum reward problem can be restated as
\begin{align} \label{exact-Q-learning} 
    \operatorname{RL}_{P,\gamma}(r) = \inf_{Q \geq \mathcal{T}_r Q}  \int_{\mathcal{S}} \sup_{a \in \mathcal{A}} Q(s,a) d\mu_0(s),
\end{align}
where the optimal $Q^{*} \in \mathcal{F}_b(\mathcal{X})$ from the above is a contraction of $\mathcal{T}_r$ meaning that $\mathcal{T}_r Q^{*} = Q^{*}$. As it is difficult to find this contraction, one method known as \textit{deep Q-learning} tackles this by parametrizing $Q$ with a deep neural network and uses regression in the supervised learning sense to match $\mathcal{T}_r Q$ to $Q$ \citep{sutton2018reinforcement}. This will deviate from the original objective since it relaxes this constraint $Q = \mathcal{T}_r \mathcal{Q}$ into the term appearing in the objective, which will naturally introduce bias. We now show quite a remarkable connection that doing so is related to policy regularization and by virtue of Corollary \ref{value-corollary}, linked to reward robustness.
\begin{theorem}\label{thm:smoothQlearning}
For any $\epsilon > 0$ and convex $\Omega$ such that $\Omega^{\star}$ is increasing, we have
\begin{align*}
    \sup_{\mu \in \mathcal{K}_{P,\gamma}} R_{\Omega}(\mu) = &\inf_{Q \in \mathcal{F}_b(\mathcal{X})}  \Bigg ( \Bigg. \epsilon\Omega^{\star}\bracket{\frac{\mathcal{T}_r Q - Q}{\epsilon}} +  \int_{\mathcal{S}} \sup_{a \in \mathcal{A}} Q(s,a) d\mu_0(s)  \Bigg. \Bigg).
\end{align*}
\end{theorem}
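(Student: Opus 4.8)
The plan is to derive the identity directly from Corollary \ref{value-corollary} by substituting the value-function representation \eqref{exact-Q-learning} of the reward-maximization term. Corollary \ref{value-corollary} already gives $\sup_{\mu \in \mathcal{K}_{P,\gamma}} R_\Omega(\mu) = \inf_{r' \in \mathcal{F}_b(\mathcal{X})}\left( \operatorname{RL}_{P,\gamma}(r') + \epsilon\Omega^\star\left(\frac{r-r'}{\epsilon}\right)\right)$, so the first step is to replace $\operatorname{RL}_{P,\gamma}(r')$ by $\inf_{Q \geq \mathcal{T}_{r'} Q}\int_\mathcal{S} \sup_{a} Q(s,a)\, d\mu_0(s)$. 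This produces a nested infimum over $r'$ and over $Q$ constrained by the Bellman inequality $Q \geq \mathcal{T}_{r'} Q$; since each inner feasible set is nonempty (for bounded $r'$ the quantity $\operatorname{RL}_{P,\gamma}(r')$ is finite) and the objective is additively separable, I would merge the two into a single joint infimum over pairs $(r',Q)$ subject to $Q \geq \mathcal{T}_{r'} Q$, and then re-decompose it in the opposite order as $\inf_{Q \in \mathcal{F}_b(\mathcal{X})} \inf_{r' : Q \geq \mathcal{T}_{r'} Q}$.

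The key algebraic observation is that $\mathcal{T}_r$ and $\mathcal{T}_{r'}$ differ only in their reward term, so the discounted transition parts cancel and $\mathcal{T}_r Q - \mathcal{T}_{r'} Q = r - r'$ pointwise. This lets me rewrite the constraint $Q \geq \mathcal{T}_{r'} Q$ equivalently as $r - r' \geq \mathcal{T}_r Q - Q$, i.e. $\frac{r-r'}{\epsilon} \geq \frac{\mathcal{T}_r Q - Q}{\epsilon}$ using $\epsilon > 0$. Hence, for fixed $Q$, the admissible $r'$ are precisely those whose rescaled residual $\frac{r-r'}{\epsilon}$ dominates $\frac{\mathcal{T}_r Q - Q}{\epsilon}$ pointwise.

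The inner minimization is then resolved by the hypothesis that $\Omega^\star$ is \emph{increasing}. Only the term $\epsilon\Omega^\star\left(\frac{r-r'}{\epsilon}\right)$ depends on $r'$, while $\int_\mathcal{S} \sup_a Q(s,a)\, d\mu_0(s)$ is constant in $r'$; so minimizing over the feasible set amounts to making $\frac{r-r'}{\epsilon}$ as small as possible pointwise. The pointwise-smallest feasible choice is $r - r' = \mathcal{T}_r Q - Q$, attained by $r' = r + Q - \mathcal{T}_r Q \in \mathcal{F}_b(\mathcal{X})$, and by monotonicity any other feasible $r'$ gives a larger value of $\Omega^\star$. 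Substituting this optimizer collapses the inner infimum to $\epsilon\Omega^\star\left(\frac{\mathcal{T}_r Q - Q}{\epsilon}\right)$, which together with the value term yields exactly the claimed expression after taking $\inf_{Q \in \mathcal{F}_b(\mathcal{X})}$.

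I expect the main obstacle to lie in justifying this inner step cleanly: one must verify that the feasible set for $r'$ is described exactly by the pointwise inequality, that monotonicity delivers genuine attainment at the boundary (not merely an approach to it), and that the saturating $r'$ remains bounded and measurable. A secondary technical point is confirming that the interchange and re-decomposition of the two infima is valid — routine for a joint infimum, but it relies on \eqref{exact-Q-learning} holding for every bounded $r'$ and on every $Q \in \mathcal{F}_b(\mathcal{X})$ admitting a feasible $r'$ (which the saturating choice supplies), so that the outer infimum ranges over all of $\mathcal{F}_b(\mathcal{X})$ with no leftover constraint on $Q$.
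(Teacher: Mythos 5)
Your proof is correct and takes essentially the same route as the paper's: the paper likewise substitutes the constrained $Q$-representation of $\operatorname{RL}_{P,\gamma}(r')$ into the dual problem, swaps the two infima, rewrites feasibility as $r' \leq r_Q := Q(s,a) - \gamma \int_{\mathcal{S}} \sup_{a' \in \mathcal{A}} Q(s',a')\, dP(s' \mid s,a)$ (your saturating choice $r' = r + Q - \mathcal{T}_r Q$ is exactly $r_Q$), uses monotonicity of the conjugate to collapse the inner infimum at the boundary, and then computes $r - r_Q = \mathcal{T}_r Q - Q$. The only cosmetic difference is that you start from Corollary \ref{value-corollary} and invoke monotonicity of $\Omega^{\star}$ directly, whereas the paper starts from Theorem \ref{value-theorem} with $(-R)^{\star}$ increasing and applies the identity $(-R)^{\star}(-r') = \epsilon\Omega^{\star}\bracket{\frac{r-r'}{\epsilon}}$ at the end --- an equivalent ordering of the same steps.
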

We remark that the above is an inequality if $\Omega^{\star}$ is not increasing which results in \textit{weak duality}. First note that the Theorem is precisely a relaxed \textit{unconstrained} version of \textit{constraint} objective appearing in \eqref{exact-Q-learning}. The most notable aspect of this result is that it links the regularized objective to finding a Q-function that minimizes the difference in the Bellman update $\epsilon\Omega^{\star}\bracket{\frac{\mathcal{T}_r Q - Q}{\epsilon}}$, depending on the choice of $\Omega$. There exists work that show a relationship between gradients in entropy regularization and Q-learning \citep{schulman2017equivalence}, however we state a more generalized result and bridge it to reward robustness. To see how this relates to the existing losses used in Q-learning, let us consider both the finite and continuous case. In the finite case, we can pick $\Omega(\mu) = \sum_{x \in \mathcal{X}} \mu(x)^2$, which is the 2-Tsallis entropy. One can easily derive the dual $\Omega^{\star}(r) = \frac{1}{4}\sum_{x \in \mathcal{X}} r(x)^2$ and thus the right side of Theorem \ref{thm:smoothQlearning} becomes (setting $\epsilon = 1)$
\begin{align*}
    \inf_{Q \in \mathcal{F}_b(\mathcal{X})} \Bigg ( \Bigg. &\frac{1}{4}\sum_{(s,a) \in \mathcal{X}} \bracket{\mathcal{T}_rQ(s,a) - Q(s,a) }^2 +    \int_{\mathcal{S}} \sup_{a \in \mathcal{A}} Q(s,a) d\mu_0(s) \Bigg. \Bigg).
\end{align*}
The variational problem above is a regression problem between $Q$ and $\mathcal{T}_r Q$ using the squared loss, which is the typical objective in deep Q-learning. The consequence of our result is that using this particular choice of loss to learn the $Q$ function is related to learning a policy with the 2-Tsallis entropy, which is rather striking. Furthermore, the 2-Tsallis entropy behaves similar to the Shannon entropy in the sense that it is maximized when $\mu$ is uniform and minimized when $\mu$ is degenerate. In the continuous case, a buffer distribution $\nu \in \mathscr{P}(\mathcal{X})$ is used for the loss by defining the mean-squared error as $L^2$ norm with respect to $\nu$ between $\mathcal{T}_r Q$ and $Q$: given by $\nrm{\mathcal{T}_r Q - Q}^2_{L^2(\nu)}$. In this case, it can be shown that if $\Omega(\mu) =  \frac{1}{4}\int_{\mathcal{X}} \bracket{\frac{d\mu}{d\nu}}^2 d\nu$ when $\mu \ll \nu$ and $+\infty$ otherwise then $\Omega^{\star}(h) = \nrm{h}^2_{L^2(\nu)}$.
\paragraph{Imitation Learning} One method of learning a policy is to imitate expert data which comes in the form of a given distribution $\mu_E \in \mathscr{P}(\mathcal{X})$. Unlike the regularized schemes above, there is no specified reward function. Using the unified perspective provided in \citep{ghasemipour2019divergence}, where imitation learning is cast as divergence minimization, we can write these methods into our framework by selecting $R(\mu) = -D(\mu,\mu_E)$ (for each corresponding divergence). In particular, our goal is to not only derive the corresponding robust-reward problem but also show that $(-R)^{\star}$ will be increasing for these cases. We delegate the technical derivations to the Supplementary Section 1.8 and only present the results here. First, we focus on Adversarial Inverse Reinforcement Learning (AIRL) \citep{fu2017learning} selecting $R(\mu) = - \operatorname{KL}(\mu,\mu_E)$ in which case we have 
\begin{align*}
    &\sup_{\mu \in \mathcal{K}_{P,\gamma}} R(\mu)\\ &= \inf_{r' \in \mathcal{F}_b(\mathcal{X})}\bracket{\operatorname{RL}_{P,\gamma}(r') + \int_{\mathcal{X}} \exp\bracket{-r'(x)} d\mu_E(x) -1 },
\end{align*}
noting that $(-R)^{\star}$ is increasing. We show the more general result that when $R(\mu) = -D_f(\mu,\mu_E)$ where $D_f$ is an $f$-divergence then $(-R)^{\star}$ will be increasing. Using this choice of $R$ corresponds to $f$-MAX \citep{ghasemipour2019divergence}. Another method for imitation learning is to use a discriminator based divergence as employed in InfoGAIL \citep{li2017infogail}. In this setting we assume we have a distance $d: \mathcal{X} \times \mathcal{X} \to \mathbb{R}$ and denoting the Lipschitz constant of a function $h \in \mathcal{F}_{b}(\mathcal{X})$ as $\operatorname{Lip}_d(h) := \sup_{x,x' \in \mathcal{X}} \card{h(x) - h(x')}/d(x,x')$, we set
\begin{align*}
    R(\mu) = - \sup_{h : \operatorname{Lip}_d(h) \leq L} \bracket{\int_{\mathcal{X}}h(x)d\mu(x) - \int_{\mathcal{X}} h(x) d\mu_E(x) }, 
\end{align*}
where $L > 0$ is chosen as a hyperparameter. In this case, we have
\begin{align*}
    \sup_{\mu \in \mathcal{K}_{P,\gamma}} R(\mu) = \inf_{r' : \operatorname{Lip}_d(r') \leq L} \bracket{ \operatorname{RL}_{P,\gamma}(r') - \int_{\mathcal{X}} r' d\mu_E }.
\end{align*}
It is clear from the above that the adversarial reward seeks to ensure $\operatorname{RL}_{P,\gamma}$ is as low as possible while maintaining that $r'$ is large around the expert trajectory due to the second term. It should also be noted that the choice of $L$ reflects as the budget of the adversary. We do not have $(-R)^{\star}$ increasing for this choice of $R$. On the other hand, it is typical in practice that an entropy term is included in this term:
\begin{align*}
    R(\mu) =&- \sup_{h : \operatorname{Lip}_d(h) \leq L} \bracket{\int_{\mathcal{X}}h(x)d\mu(x) - \int_{\mathcal{X}} h(x) d\mu_E(x) }\\ &- \epsilon \E_{\mu(s,a)}\left[\operatorname{KL}(\pi_{\mu}(\cdot \mid s), U_{A}) \right],
\end{align*}
for some $\epsilon > 0$ where $U_{A}$ is the uniform distribution over $\mathcal{A}$. Under this setting, it turns out that $(-R)^{\star}$ is now increasing, in which case Lemma \ref{lemma:optVandoptr} applies. It is rather intriguing that the role of entropy here ensures that the reward that the InfoGAIL policy maximizes is worst-case, of high value around trajectories from the expert, and attains the familiar shape in Equation \eqref{reward-valueshape}. This further advocates for the use of entropy regularization.
\paragraph{Entropic Exploration} We now consider the case where there is no reward function or expert distribution specified and the only objective to maximize is entropy. For such a scheme, there exists efficient algorithms \citep{hazan2019provably}. More specifically, we have $R(\mu) = -\operatorname{KL}(\mu, U_{\mathcal{X}})$ where $U_{\mathcal{X}}$ is the uniform distribution over $\mathcal{X}$. We then have that
\begin{align*}
    &\sup_{\mu \in \mathcal{K}_{P,\gamma}} R(\mu)\\ &= \inf_{r' \in \mathcal{F}_b(\mathcal{X})} \bracket{ \operatorname{RL}_{P,\gamma}(r') + \int_{\mathcal{X}} \exp\bracket{-r'(x)} dU_{\mathcal{X}}(x) - 1 },
\end{align*}
and similar to the other choices of $R$, we have that $(-R)^{\star}$ is increasing. We would like to remark that if one defines $\operatorname{KL}$ to be $+\infty$ when $\mu$ is not a probability measure then $(-R)^{\star}(r) = \log \int_{\mathcal{X}} \exp (r(x)) dU_{\mathcal{X}}(x)$ \citep{ruderman2012tighter}.
\begin{figure*}
     \centering
     \begin{subfigure}[b]{0.3\textwidth}
         \centering
         \includegraphics[scale=0.65]{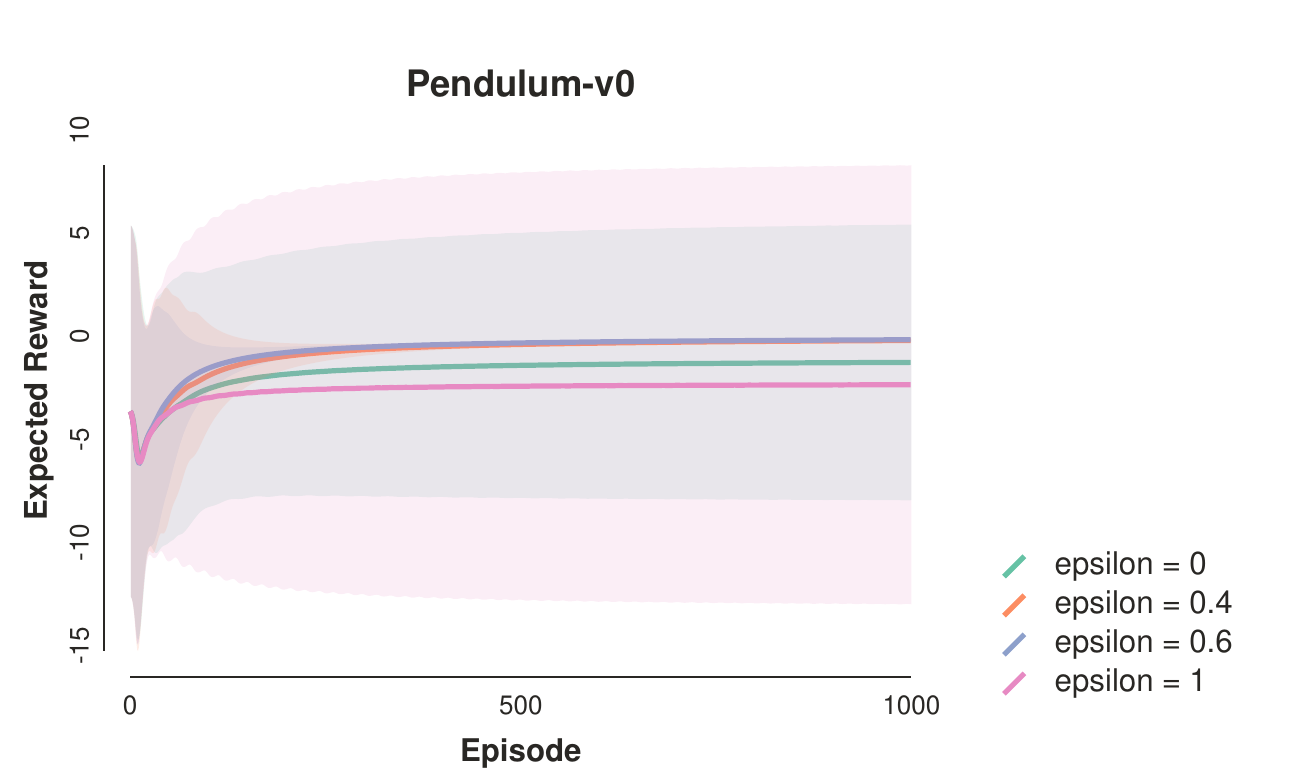}
         \label{}
     \end{subfigure}
     \hfill
     \begin{subfigure}[b]{0.5\textwidth}
         \centering
         \includegraphics[scale=0.65]{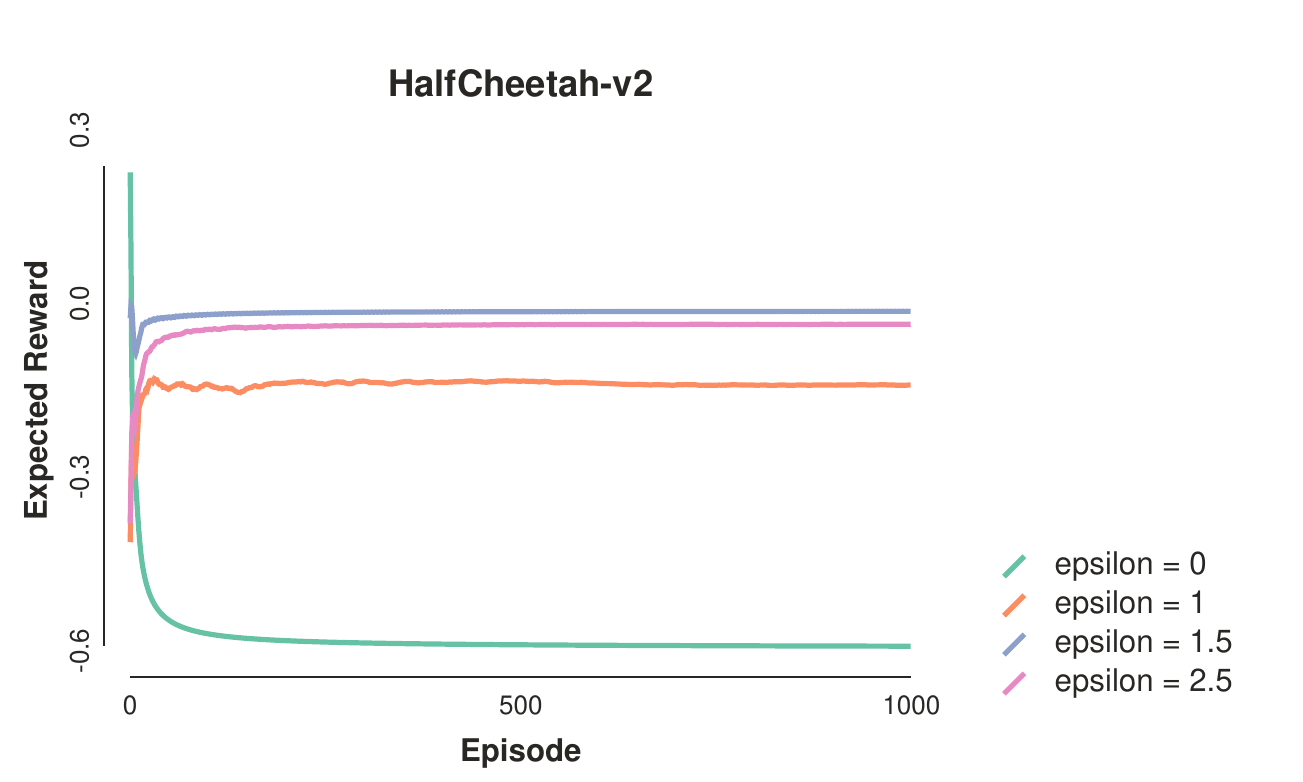}
         \label{}
     \end{subfigure}
        \caption{Expected reward over $1000$ episodes of policies returned by SAC trained on an adversarial reward $r_{\operatorname{adv}}$ and tested on the true reward using different weighting $\epsilon$ for entropy.}
        \label{fig:experiment}
\end{figure*}
\section{Experiments}
The main practical ramification of our work is to advocate the use of regularized policies by highlighting the robustification aspect, for which we derived a strong theoretical link. There exists extensive empirical evidence for which our work provides foundation for. However, we will show some brief yet illustrative examples which focus on the reward adversarial aspect of regularized policies, as illustrated by our main result Theorem 1. Our goal is thus to see the performance of regularized policies on rewards they are not trained on and analyze their behavior based on the robustness parameter $\epsilon$. First we consider the Pendulum-v0 environment and train the Soft-Actor-Critic (SAC) method on a reward that has been altered with. We do so by constructing an adversarial reward $r_{\operatorname{adv}}$ using
\begin{align*}
    r_{\operatorname{adv}} = \begin{cases} r(s,a) + \delta &\text{  if }r(s,a) \leq -5\\ r(s,a) &\text{  otherwise} \end{cases}
\end{align*}
where $\delta$ is drawn from a normal distribution centered at $5$ with variance $0.1$. In doing so, initial states of the pendulum will be favored and easier to reach however the maximal reward will still be attained at the inverted position. We train SAC for various values of $\epsilon$ and test their performance on the true reward in Figure \ref{fig:experiment} (left). We find that the effect of increasing $\epsilon$ yields better performance than no entropy however adding too much entropy (in the case of $\epsilon = 1$) damages performance. We repeat a similar experiment for HalfCheetah-v2 however using an adversarial reward specified by
\begin{align*}
    r_{\operatorname{adv}} = \begin{cases} r(s,a) + \delta &\text{  if }r(s,a) \leq 0\\ r(s,a) &\text{  otherwise} \end{cases}
\end{align*}
where $\delta$ is drawn from a normal distribution centered at $3$ with variance $0.1$. We plot the performance under the expected reward in Figure \ref{fig:experiment} (right). It can also be seen that adding entropy surpasses the non-regularized policy $\epsilon = 0$ and that increasing $\epsilon$ higher will worsen performance (as seen by $\epsilon = 2.5$).
\section{Conclusion}
Our results allow us to reason about regularization of policies and the regression Q-learning objective from the perspective of robustness. This is not surprising given the advancements in machine learning more generally pointing at the link between regularization and robustness along with the impressive empirical evidence of these schemes. Regularized objectives, however, offer other benefits that are inherently sample based phenomenon such as smoothened objectives or stable training. While our results do not directly target this, we have built a connection between two objectives which will pose modular for future developments.
\section*{Acknowledgements}
We would like to thank Zakaria Mhammedi for useful feedback regarding the technical analysis.

\bibliographystyle{apalike}
\bibliography{references}

\clearpage
\newpage
\section{Proofs of Main Results}\label{supp-formal}
We first introduce some notation that will be used exclusively for the Appendix. For any function $R: \mathscr{B}(\mathcal{X}) \to \mathbb{R}$, we define $R_{+}(\mu) = R(\mu) + \iota_{\mathscr{P}}(\mu)$ and $R_{-}(\mu) = R(\mu) - \iota_{\mathscr{P}}(\mu)$. Indeed, it should noted that if $R$ is upper semi-continuous concave then $R_{-}$ is upper  semi-continuous concave and $-R_{-}$ is proper convex. The central benefit of rewriting $R$ in this is way is due to
\begin{align*}
    \sup_{\mu \in \mathcal{K}_{P,\gamma}} R(\mu) = \sup_{\mu \in \mathcal{K}_{P,\gamma}} R_{-}(\mu).
\end{align*}
First we will show a technical result.
\begin{lemma}
If $R: \mathscr{B}(\mathcal{X}) \to \mathbb{R}$ is upper semicontinuous and concave then $(-R_{-})^{\star}$ is increasing.
\end{lemma}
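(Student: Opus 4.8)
The plan is to compute the conjugate $(-R_{-})^{\star}$ explicitly and then read off monotonicity directly from its variational form. Recall the appendix notation $R_{-}(\mu) = R(\mu) - \iota_{\mathscr{P}}(\mu)$, so that $-R_{-}(\mu) = -R(\mu) + \iota_{\mathscr{P}}(\mu)$ equals $-R(\mu)$ when $\mu \in \mathscr{P}(\mathcal{X})$ and $+\infty$ otherwise. Substituting $F = -R_{-}$ into the definition of the Legendre--Fenchel dual, the $+\infty$ value annihilates every $\mu \notin \mathscr{P}(\mathcal{X})$ in the supremum, so for any $h \in \mathcal{F}_b(\mathcal{X})$ the domain collapses to the probability measures:
\begin{align*}
    (-R_{-})^{\star}(h) = \sup_{\mu \in \mathscr{B}(\mathcal{X})} \left( \int_{\mathcal{X}} h(x)\, d\mu(x) + R_{-}(\mu) \right) = \sup_{\mu \in \mathscr{P}(\mathcal{X})} \left( \int_{\mathcal{X}} h(x)\, d\mu(x) + R(\mu) \right).
\end{align*}

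The monotonicity is then immediate from the fact that every $\mu \in \mathscr{P}(\mathcal{X})$ is a \emph{positive} measure. Suppose $h(x) \geq h'(x)$ for all $x \in \mathcal{X}$. Then for each fixed $\mu \in \mathscr{P}(\mathcal{X})$, integration against a positive measure preserves the pointwise inequality, giving $\int_{\mathcal{X}} h\, d\mu \geq \int_{\mathcal{X}} h'\, d\mu$, and hence the objective satisfies $\int_{\mathcal{X}} h\, d\mu + R(\mu) \geq \int_{\mathcal{X}} h'\, d\mu + R(\mu)$. Taking the supremum over $\mu \in \mathscr{P}(\mathcal{X})$ on both sides yields $(-R_{-})^{\star}(h) \geq (-R_{-})^{\star}(h')$, which is exactly the increasing property.

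There is no real obstacle here; the only step requiring care is the reduction of the conjugate's effective domain to $\mathscr{P}(\mathcal{X})$, which hinges on correctly unpacking $R_{-} = R - \iota_{\mathscr{P}}$ and observing that subtracting the convex indicator drives $-R_{-}$ to $+\infty$ off the probability simplex. Once the supremum is confined to positive measures, monotonicity of the integral in its integrand does all the work. It is worth emphasizing that the concavity and upper semicontinuity of $R$ are \emph{not} used in the monotonicity argument itself; they serve only to guarantee (as the surrounding text records) that $-R_{-}$ is proper convex, so that the conjugate and the surrounding duality machinery are well defined.
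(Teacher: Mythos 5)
Your proposal is correct, and it takes a mildly but genuinely different route from the paper. The paper proves monotonicity by fixing a maximizer $\nu \in \argsup_{\mu \in \mathscr{P}(\mathcal{X})} \bracket{\int_{\mathcal{X}} r\, d\mu + R(\mu)}$ --- whose existence it justifies by upper semicontinuity of $R$ together with compactness of $\mathscr{P}(\mathcal{X})$ --- and then bounds $(-R_{-})^{\star}(r) - (-R_{-})^{\star}(r') \leq \int_{\mathcal{X}} (r - r')\, d\nu \leq 0$ using positivity of $\nu$. You instead compare the objectives pointwise in $\mu$ \emph{before} taking suprema: since $h \geq h'$ implies $\int h\, d\mu + R(\mu) \geq \int h'\, d\mu + R(\mu)$ for every fixed probability measure $\mu$, monotonicity of the supremum does the rest. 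Both arguments rest on the same kernel --- the indicator $\iota_{\mathscr{P}}$ restricts the conjugate's effective domain to \emph{positive} measures, making the linear term monotone --- but your version dispenses with the maximizer entirely, and you are right that concavity and upper semicontinuity then play no role in the monotonicity step itself (they are needed only so that $-R_{-}$ is proper convex and the biconjugation in Theorem \ref{value-theorem} is available). This is a small but real gain: the paper's appeal to compactness of $\mathscr{P}(\mathcal{X})$ is not innocent in the $\bracket{\mathscr{B}(\mathcal{X}), \mathcal{F}_b(\mathcal{X})}$ pairing used here, where the relevant weak topology is coarser than the usual weak-$*$ setting and existence of maximizers requires care; your argument sidesteps that subtlety altogether and yields the same conclusion under weaker effective hypotheses.
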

\begin{proof}
Let $r,r' \in \mathcal{F}_b(\mathcal{X})$ such that  $r \leq r'$ and let
\begin{align*}
    \nu \in \argsup_{\mu \in \mathscr{P}(\mathcal{X})} \bracket{ \int_{\mathcal{X}} r(x) d\mu(x) + R(\mu)},
\end{align*}
noting that $\nu$ exists since the mapping $\mu \mapsto \int_{\mathcal{X}} r(x) d\mu(x) + R(\mu)$ is concave, upper semicontinuous and $\mathscr{P}(\mathcal{X})$ is compact. Next we have
\begin{align*}
    &(-R_{-})^{\star}(r) - (-R_{-})^{\star}(r')\\ &= \sup_{\mu \in \mathscr{P}(\mathcal{X})} \bracket{ \int_{\mathcal{X}} r(x) d\mu(x) + R(\mu)} -  \sup_{\mu \in \mathscr{P}(\mathcal{X})} \bracket{ \int_{\mathcal{X}} r'(x) d\mu(x) + R(\mu)}\\
    &\leq \int_{\mathcal{X}} r(x) d\nu(x) + R(\nu) - \int_{\mathcal{X}} r'(x) d\nu(x) - R(\nu)\\
    &= \int_{\mathcal{X}} \bracket{r(x) - r'(x)} d\nu(x)\\
    &\leq 0
\end{align*}
\end{proof}
We also recall some classical results regarding Fenchel duality between the spaces $\mathcal{F}_b(\mathcal{X})$ and $\mathscr{B}(\mathcal{X})$.
\begin{definition}[\cite{rockafellar1968integrals}]
For any proper convex function $F: \mathcal{F}_b(\mathcal{X}) \to (-\infty, \infty]$ and $\mu \in \mathscr{B}(\mathcal{X})$ we define
\begin{align*}
    F^{\star}(\mu) = \sup_{h \in \mathcal{F}_b} \bracket{ \int_{\mathcal{X}} h d\mu - F(h) }
\end{align*}
and for any $h \in \mathcal{F}_b(\Omega)$ we define
\begin{align*}
    F^{\star \star}(h) = \sup_{\mu \in \mathscr{B}(\mathcal{X})} \bracket{ \int_{\mathcal{X}} h d\mu - F^{\star}(\mu) }.
\end{align*}
\end{definition}
\begin{theorem}[\cite{zalinescu2002convex} Theorem 2.3.3]
\label{self-conjugacy}
If $X$ is a Hausdorff locally convex space, and $F: X \to (-\infty, \infty]$ is a proper convex lower semi-continuous function then $F^{\star \star} = F$.
\end{theorem}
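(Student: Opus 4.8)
The plan is to prove the classical Fenchel--Moreau biconjugation theorem by establishing the two inequalities $F^{\star\star} \le F$ and $F^{\star\star} \ge F$ separately, the second resting on the Hahn--Banach separation theorem applied to the epigraph of $F$. The roles of the hypotheses are clean: convexity and lower semicontinuity make the epigraph closed and convex, while the local convexity and Hausdorff assumptions are precisely what guarantee that closed convex sets can be strictly separated from external points by continuous linear functionals.

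First I would dispatch the easy inequality $F^{\star\star} \le F$, which holds for any function, without convexity or semicontinuity. It is immediate from the Fenchel--Young inequality: for every $x \in X$ and every $x^* \in X^*$ the definition of $F^{\star}$ gives $F^{\star}(x^*) \ge \langle x^*, x\rangle - F(x)$, equivalently $\langle x^*, x\rangle - F^{\star}(x^*) \le F(x)$; taking the supremum over $x^*$ yields $F^{\star\star}(x) \le F(x)$.

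The substantive direction is $F^{\star\star} \ge F$. Since $F$ is convex and lower semicontinuous, its epigraph $\mathrm{epi}\,F = \{(x,t) \in X \times \mathbb{R} : t \ge F(x)\}$ is a closed convex subset of the locally convex space $X \times \mathbb{R}$. Arguing by contradiction, suppose $F^{\star\star}(x_0) < F(x_0)$ for some $x_0$, and choose $s_0$ with $F^{\star\star}(x_0) \le s_0 < F(x_0)$ so that $(x_0,s_0) \notin \mathrm{epi}\,F$. By strict separation there is $(\ell,c) \in X^* \times \mathbb{R}$ and $\gamma \in \mathbb{R}$ with $\ell(x) + c\,t \ge \gamma > \ell(x_0) + c\,s_0$ for all $(x,t) \in \mathrm{epi}\,F$; letting $t \to +\infty$ forces $c \ge 0$. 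When $c > 0$ I would normalise $c = 1$, substitute $t = F(x)$ to obtain $F^{\star}(-\ell) \le -\gamma$, and then bound $F^{\star\star}(x_0) \ge \langle -\ell, x_0\rangle - F^{\star}(-\ell) \ge \gamma - \ell(x_0) > s_0$, contradicting $s_0 \ge F^{\star\star}(x_0)$.

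The main obstacle is the degenerate case $c = 0$, where the separating hyperplane is vertical and carries no link between $t$ and $F$. To handle it I would first prove, as a preliminary lemma, that $F^{\star}$ is proper: choosing $\bar x \in \mathrm{dom}\,F$ (which exists as $F$ is proper) and separating a point strictly below $(\bar x, F(\bar x))$, the fact that the $x$-coordinate lies in the domain forces the vertical component to be nonzero, so the $c>0$ argument produces some $x_1^* \in \mathrm{dom}\,F^{\star}$. In the $c=0$ case I then have $\ell(x) \ge \gamma$ on $\mathrm{dom}\,F$ while $\ell(x_0) < \gamma$, and I would perturb $x_1^*$ along $-\ell$: for every $\lambda > 0$ one gets $F^{\star}(x_1^* - \lambda \ell) \le F^{\star}(x_1^*) - \lambda \gamma$, whence $F^{\star\star}(x_0) \ge \langle x_1^*, x_0\rangle - F^{\star}(x_1^*) + \lambda(\gamma - \ell(x_0))$. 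Since $\gamma - \ell(x_0) > 0$, letting $\lambda \to \infty$ drives $F^{\star\star}(x_0) \to +\infty$, contradicting $F^{\star\star}(x_0) \le s_0 < \infty$. Combining both cases gives $F^{\star\star} \ge F$ and hence the asserted equality $F^{\star\star} = F$.
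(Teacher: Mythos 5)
The paper does not prove this statement at all: it is imported verbatim as a citation to \citep[Theorem~2.3.3]{zalinescu2002convex} and used as a black box in the proof of Theorem~\ref{value-theorem} (step $(1)$), so there is no internal proof to compare against. Your argument is the standard Fenchel--Moreau biconjugation proof, essentially the one in Z\u{a}linescu's book, and it is correct. In particular you handle the two genuine pitfalls of this proof properly: (i) the easy inequality $F^{\star\star}\le F$ via Fenchel--Young needs no hypotheses, and you say so; (ii) in the separation step you correctly isolate the degenerate vertical-hyperplane case $c=0$, which only arises when $x_0\notin\operatorname{dom}F$, and you resolve it the standard way --- first establishing properness of $F^{\star}$ (your separation of a point strictly below $(\bar x, F(\bar x))$ with $\bar x\in\operatorname{dom}F$ correctly forces $c'>0$, since $c'=0$ would give $\ell'(\bar x)\ge\gamma'>\ell'(\bar x)$), then tilting a finite-conjugate point $x_1^{*}$ by $-\lambda\ell$ and sending $\lambda\to\infty$ to drive $F^{\star\star}(x_0)$ to $+\infty$, contradicting $F^{\star\star}(x_0)\le s_0<\infty$. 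The only cosmetic remarks: the bound $F^{\star}(x_1^{*}-\lambda\ell)\le F^{\star}(x_1^{*})-\lambda\gamma$ deserves the one-line justification that the supremum defining the conjugate may be restricted to $\operatorname{dom}F$, where $\ell\ge\gamma$; and properness of $F^{\star}$ also requires $F^{\star}>-\infty$ everywhere, which is immediate from $F$ being proper ($F^{\star}(x^{*})\ge\langle x^{*},\bar x\rangle - F(\bar x)$). Neither affects validity; your proof is complete and supplies exactly what the paper leaves to the reference.
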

\subsection{Proof of Theorem \ref{value-theorem}}
We have
\begin{align*}
    \sup_{\mu\in \mathcal{K}_{P,\gamma} } R(\mu) &= \sup_{\mu\in \mathcal{K}_{P,\gamma} } -\bracket{-R(\mu)}\\
    &\stackrel{(1)}{=} \sup_{\mu\in \mathcal{K}_{P,\gamma} } -\bracket{-R(\mu)}^{\star \star}\\
    &\stackrel{(2)}{=} \sup_{\mu\in \mathcal{K}_{P,\gamma} } -\sup_{r' \in \mathcal{F}_b(\mathcal{X})} \bracket{\int_{\mathcal{X}} r'(x) d\mu(x) - \bracket{-R}^{\star}(r') }\\
    &= \sup_{\mu\in \mathcal{K}_{P,\gamma} } \inf_{r' \in \mathcal{F}_b(\mathcal{X})} \bracket{\int_{\mathcal{X}} \bracket{-r'(x)} d\mu(x) + \bracket{-R}^{\star}(r') }\\
    &\stackrel{(3)}{=}  \inf_{r' \in \mathcal{F}_b(\mathcal{X})} \sup_{\mu\in \mathcal{K}_{P,\gamma} } \bracket{\int_{\mathcal{X}} \bracket{-r'(x)} d\mu(x) + \bracket{-R}^{\star}(r') }\\
    &\stackrel{(4)}{=}  \inf_{r' \in \mathcal{F}_b(\mathcal{X})}  \bracket{ \sup_{\mu\in \mathcal{K}_{P,\gamma} }\int_{\mathcal{X}} r'(x) d\mu(x) + \bracket{-R}^{\star}(-r') }\\
    &\stackrel{(5)}{=}  \inf_{r' \in \mathcal{F}_b(\mathcal{X})}  \bracket{ \operatorname{RL}_{P,\gamma}(r') + \bracket{-R}^{\star}(-r') }
    \end{align*}
where $(1)$ holds since $-R$ is proper convex, $(2)$ is the definition of the conjugate, $(3)$ is an application of Ky Fan's minimax theorem \citep[Theorem~2]{fan1953minimax} noting that the set $\mathcal{K}_{P,\gamma}$ is compact, and that the mapping $r\mapsto \int_{\mathcal{X}} \bracket{-r'(x)} d\mu(x) + \bracket{-R}^{\star}(r')$ is concave and the mapping $\mu \mapsto \int_{\mathcal{X}} \bracket{-r'(x)} d\mu(x)$ is linear. $(4)$ holds by negating $r'$ since $-\mathcal{F}_b(\mathcal{X}) = \mathcal{F}_b(\mathcal{X})$ and $(5)$ holds by definition.
\subsection{Proof of Theorem \ref{variable-theorem}}
By definition, we have $\operatorname{RL}_{P,\gamma}(r^{*}) - \ip{r^{*}}{\mu^{*}} \geq 0$. To show the other direction, it follows that
\begin{align*}
    \operatorname{RL}_{P,\gamma}(r^{*}) -  \ip{r^{*}}{\mu^{*}} &= \bracket{\operatorname{RL}_{P,\gamma}(r^{*}) + (-R)^{\star}(-r^{*}) } -  \bracket{\ip{r^{*}}{\mu^{*}} +  (-R)^{\star}(-r^{*})}\\
    &\stackrel{(1)}{=} \inf_{r' \in \mathcal{F}_b(\mathcal{X})} \bracket{ \operatorname{RL}_{P,\gamma}(r') +(-R)^{\star}(-r')} - \bracket{\ip{r^{*}}{\mu^{*}} +  (-R)^{\star}(-r^{*})}\\
    &\stackrel{(2)}{=} \sup_{\mu \in \mathcal{K}_{P,\gamma}} R(\mu) - \bracket{\ip{r^{*}}{\mu^{*}} +  (-R)^{\star}(-r^{*})}\\
    &\stackrel{(3)}{=} R(\mu^{*}) - \bracket{\ip{r^{*}}{\mu^{*}} +  (-R)^{\star}(-r^{*})}\\
    &= \ip{-r^{*}}{\mu^{*}} - \bracket{-R}(\mu^{*}) - \bracket{-R}^{\star}(-r^{*})\\
    &\stackrel{(4)}{\leq} 0,
\end{align*}
where $(1)$ follows via optimality of $r^{*}$, $(2)$ is due to the duality result, $(3)$ follows via optimality of $\mu^{*}$ and $(4)$ is an application of the Fenchel-Young inequality on the convex function $-R$. Finally, we have $\operatorname{RL}_{P,\gamma}(r^{*}) = \ip{r^{*}}{\mu^{*}}$, which implies optimality of $\mu^{*}$ and concludes the proof.

\subsection{Proof of Theorem \ref{generalized-valueproblem}}
Using the classic linear programming duality result, we have
\begin{align}
\operatorname{RL}_{P,\gamma}(r) = (1 - \gamma) \inf_{V \in \mathcal{V}_{P,r,\gamma}} \int_{\mathcal{S}}V(s)d\mu_0(s), \label{LP-duality}
\end{align}
where
\begin{align*}
    \mathcal{V}_{P,r,\gamma} = \braces{V \in \mathcal{F}_b(\mathcal{S}) : V(s) \geq r(s,a) + \gamma \int_{\mathcal{S}} V(s') dP(s' \mid s,a), \forall (s,a) \in \mathcal{X} },
\end{align*}
and define
\begin{align}
    r_V(s,a) := V(s) - \gamma \int_{\mathcal{S}} V(s') dP(s' \mid s,a).
\end{align}
It then holds that
\begin{align*}
    \sup_{\mu \in \mathcal{K}_{P,\gamma}} R(\mu) &\stackrel{(1)}{=} \inf_{r' \in \mathcal{F}_b(\mathcal{X})}\bracket{\operatorname{RL}_{P,\gamma}(r') + (-R)^{\star}(-r') }\\
    &\stackrel{(2)}{=} \inf_{r' \in \mathcal{F}_b(\mathcal{X})}\bracket{(1 - \gamma)\inf_{V \in \mathcal{V}_{P,r',\gamma}} \int_{\mathcal{S}}V(s)d\mu_0(s) + (-R)^{\star}(-r') }\\
    &= \inf_{r' \in \mathcal{F}_b(\mathcal{X})}\inf_{V \in \mathcal{F}_b(\mathcal{S})} \bracket{(1 - \gamma) \int_{\mathcal{S}}V(s)d\mu_0(s) + (-R)^{\star}(-r') + \iota_{\mathcal{V}_{P,r',\gamma}}(V) }\\
    &= \inf_{V \in \mathcal{F}_b(\mathcal{S})}\inf_{r' \in \mathcal{F}_b(\mathcal{X})}\bracket{ (1 - \gamma)\int_{\mathcal{S}}V(s)d\mu_0(s) + (-R)^{\star}(-r') + \iota_{\mathcal{V}_{P,r',\gamma}}(V) }\\
    &= \inf_{V \in \mathcal{F}_b(\mathcal{S})}\inf_{r' \leq r_V}\bracket{ (1 - \gamma)\int_{\mathcal{S}}V(s)d\mu_0(s) + (-R)^{\star}(-r')  },
\end{align*}
where $(1)$ is due to Theorem \ref{value-theorem},  $(2)$ is due to \eqref{LP-duality} and noting that $r \leq r_V$ implies $V(s) \geq r(s,a) + \gamma \int_{\mathcal{S}} V(s') dP(s' \mid s,a)$ concludes the proof.
\subsection{Proof of Lemma \ref{lemma:optVandoptr}}
First note that for any $\mu \in \mathcal{K}_{P,\gamma}$, we have
\begin{align*}
    &\int_{\mathcal{X}} r_V(s,a) d\mu(s,a)\\ &= \bracket{\int_{\mathcal{S}} V(s) d\mu(s,a) - \gamma \int_{\mathcal{X}} \int_{\mathcal{S}} V(s') dP(s'\mid s,a) d\mu(s,a) }\\
    &=  \bracket{\int_{\mathcal{S}} V(s) d\mu(s,a) - \int_{\mathcal{S}} V(s) d\mu(s,a) + (1 - \gamma) \int_{\mathcal{S}} V(s) d\mu_0(s) }\\
    &= (1 - \gamma) \int_{\mathcal{S}} V(s) d\mu_0(s),
\end{align*}
and so we can conclude for any $V \in \mathcal{F}_b(\mathcal{S})$, we have
\begin{align*}
    \operatorname{RL}_{P,\gamma}(r_V) = (1 - \gamma) \int_{\mathcal{S}} V(s) d\mu_0(s).
\end{align*}
Next, we have
\begin{align*}
     \sup_{\mu \in \mathcal{K}_{P,\gamma}} R(\mu) &= \inf_{V \in \mathcal{F}_b(\mathcal{S})}\bracket{ (1 - \gamma) \int_{\mathcal{S}}V(s)d\mu_0(s) +(-R)^{\star}(-r_V)  }\\
     &= \inf_{V \in \mathcal{F}_b(\mathcal{S})}\bracket{ \operatorname{RL}_{P,\gamma}(r_V) +(-R)^{\star}(-r_V)  }\\
     &\geq \inf_{r' \in \mathcal{F}_b(\mathcal{X})}\bracket{ \operatorname{RL}_{P,\gamma}(r') + (-R)^{\star}\bracket{-r'} }\\
    &= \sup_{\mu \in \mathcal{K}_{P,\gamma}} R(\mu),
\end{align*}
and since the lower bound can achieve equality, it implies that the optimal $r^{*}$ is of the form $r_V$.

\subsection{Proof of Corollary \ref{value-corollary}}
We have
\begin{align*}
    (-R)^{\star}(-r') &= \sup_{\mu \in \mathscr{B}(\mathcal{X})} \bracket{\int_{\mathcal{X}} -r'(x) d\mu(x) + R(\mu) }\\
    &= \sup_{\mu \in \mathscr{B}(\mathcal{X})} \bracket{\int_{\mathcal{X}} -r'(x) d\mu(x) + \int_{\mathcal{X}} r(x) d\mu(x) - \epsilon\Omega(\mu) }\\
    &= \sup_{\mu \in \mathscr{B}(\mathcal{X})} \bracket{\int_{\mathcal{X}} r(x)-r'(x) d\mu(x)  - \epsilon\Omega(\mu) }\\
    &= \epsilon \sup_{\mu \in \mathscr{B}(\mathcal{X})} \bracket{\int_{\mathcal{X}} \frac{r(x) - r'(x)}{\epsilon} d\mu(x)  - \Omega(\mu) }\\
    &= \epsilon \Omega^{\star}\bracket{\frac{r - r'}{\epsilon}},
\end{align*}
which concludes the proof.
\subsection{Proof of Theorem \ref{thm:smoothQlearning}}
First define the set
\begin{align*}
    \mathcal{Q}_{P,r,\gamma} = \braces{Q \in \mathcal{F}_b(\mathcal{X}) : Q(s,a) \geq r(s,a) + \gamma \int_{\mathcal{X}} \sup_{a' \in \mathcal{A}} Q(s',a') dP(s' \mid s,a) },
\end{align*}
and define
\begin{align*}
    r_Q(s,a) = Q(s,a) - \gamma  \int_{\mathcal{X}}\sup_{a' \in \mathcal{A}} Q(s',a') dP(s' \mid s,a) 
\end{align*}
Next we can write
\begin{align} 
    \operatorname{RL}_{P,\gamma}(r) = \inf_{Q \in \mathcal{Q}_{P,r,\gamma}}  \int_{\mathcal{S}} \sup_{a \in \mathcal{A}} Q(s,a) d\mu_0(s),\tag{A} \label{eq:rl-Q}
\end{align}
next we have
\begin{align*}
    \sup_{\mu \in \mathcal{K}_{P,\gamma}} R(\mu) &\stackrel{(1)}{=} \inf_{r' \in \mathcal{F}_b(\mathcal{X})} \bracket{\operatorname{RL}_{P,\gamma}(r') + (-R)^{\star}(-r')  }\\
    &\stackrel{(2)}{=} \inf_{r' \in \mathcal{F}_b(\mathcal{X})} \bracket{ \inf_{Q \in \mathcal{Q}_{P,r',\gamma}} \int_{\mathcal{S}} \sup_{a \in \mathcal{A}} Q(s,a) d\mu_0(s) + (-R)^{\star}(-r')  }\\
    &= \inf_{r' \in \mathcal{F}_b(\mathcal{X})} \bracket{ \inf_{Q \in \mathcal{F}_b(\mathcal{X})} \bracket{\int_{\mathcal{S}} \sup_{a \in \mathcal{A}} Q(s,a) d\mu_0(s) + \iota_{\mathcal{Q}_{P,r',\gamma}}(Q)} +  (-R)^{\star}(-r')  }\\
    &= \inf_{r' \in \mathcal{F}_b(\mathcal{X})} \inf_{Q \in \mathcal{F}_b(\mathcal{X})} \bracket{  \int_{\mathcal{S}} \sup_{a \in \mathcal{A}} Q(s,a) d\mu_0(s)  +  (-R)^{\star}(-r') + \iota_{\mathcal{Q}_{P,r',\gamma}}(Q)  }\\
    &=  \inf_{Q \in \mathcal{F}_b(\mathcal{X})} \inf_{r' \in \mathcal{F}_b(\mathcal{X})} \bracket{  \int_{\mathcal{S}} \sup_{a \in \mathcal{A}} Q(s,a) d\mu_0(s)  +  (-R)^{\star}(-r') + \iota_{\mathcal{Q}_{P,r',\gamma}}(Q)  }\\
    &=  \inf_{Q \in \mathcal{F}_b(\mathcal{X})}  \bracket{  \int_{\mathcal{S}} \sup_{a \in \mathcal{A}} Q(s,a) d\mu_0(s)  + \inf_{r' \in \mathcal{F}_b(\mathcal{X})}\bracket{ (-R)^{\star}(-r') + \iota_{\mathcal{Q}_{P,r',\gamma}}(Q) } }\\
    &=  \inf_{Q \in \mathcal{F}_b(\mathcal{X})}  \bracket{  \int_{\mathcal{S}} \sup_{a \in \mathcal{A}} Q(s,a) d\mu_0(s)  + \inf_{r' \leq r_{Q} } (-R)^{\star}(-r') }\\
    &\stackrel{(3)}{=}  \inf_{Q \in \mathcal{F}_b(\mathcal{X})}  \bracket{  \int_{\mathcal{S}} \sup_{a \in \mathcal{A}} Q(s,a) d\mu_0(s)  + (-R)^{\star}(-r_Q) },
\end{align*}
where $(1)$ is due to Theorem \ref{value-theorem}, $(2)$ is due to \eqref{eq:rl-Q}, and $(3)$ follows since $(-R)^{\star}$ is increasing by assumption. Next, noting that $(-R)^{\star}(-r_Q) = \epsilon\Omega^{\star}\bracket{\frac{r - r_Q}{\epsilon}}$, and that
\begin{align*}
    r - r_Q &= r(s,a) - \frac{Q(s,a)}{1 - \gamma} +\gamma \int_{\mathcal{X}} \sup_{a' \in \mathcal{A}} Q(s',a') dP(s' \mid s,a)  \\
            &=  \bracket{r(s,a) + \gamma \int_{\mathcal{X}} \sup_{a' \in \mathcal{A}} Q(s',a') dP(s' \mid s,a)} - Q(s,a) \\
            &= \mathcal{T}Q - Q,
\end{align*}
which is the difference between the Bellman operator. Putting this together yields 
\begin{align*}
    &\sup_{\mu \in \mathcal{K}_{P,\gamma}} R(\mu)\\ &= \inf_{Q \in \mathcal{F}_b(\mathcal{X})}  \bracket{\epsilon\Omega^{\star}\bracket{\frac{r - r_Q}{\epsilon}} +   \int_{\mathcal{S}} \sup_{a \in \mathcal{A}} Q(s,a) d\mu_0(s)   }\\
    &= \inf_{Q \in \mathcal{F}_b(\mathcal{X})}  \bracket{\epsilon\Omega^{\star}\bracket{\frac{\mathcal{T} Q - Q}{\epsilon}} +   \int_{\mathcal{S}} \sup_{a \in \mathcal{A}} Q(s,a) d\mu_0(s)   }
\end{align*}

\subsection{Proof of Lemma \ref{SAC-conjugate}}
We first set $n = \card{A}$. Let $\mathcal{F}_b(\mathcal{S}, \mathbb{R}^n)$ denote the set of measurable and bounded functions mapping from $\mathcal{S}$ into $\mathbb{R}^n$. For any $\pi \in \mathcal{F}_b(\mathcal{S}, \mathbb{R}^n)$, we use $\pi(a \mid s)$ to denote the index corresponding to $a \in \mathcal{A}$ for the function $\pi$ evaluated at $s \in \mathcal{S}$. Next, we define the following set:
\begin{align*}
    \mathcal{B}_{\times} := \braces{ \mu(s,a) = \pi(a \mid s) \cdot \mu_S(s) \mid \mu_S \in \mathscr{P}(\mathcal{S}), \pi \in \mathcal{F}_b(\mathcal{S}, \mathbb{R}^n) }, 
\end{align*}
noting that $\mathcal{B}_{\times} \subseteq \mathscr{B}(\mathcal{X})$. We also have that $\mathscr{P}(\mathcal{X}) \subset \mathcal{B}_{\times}$ since this corresponds to having each $\pi(a \mid s)$ satisfy $\pi(a \mid s) \in [0,1]$ and $\sum_{a \in \mathcal{A}} \pi(a \mid s) = 1$. We then redefine 
\begin{align*}
    \Omega(\mu) = \begin{cases} \E_{\mu(s,a)}\left[\operatorname{KL}(\pi_{\mu}(\cdot \mid s),U)\right] &\text{  if }\mu \in \mathcal{B}_{\times} \\ \infty &\text{  if }\mu \notin \mathcal{B}_{\times} \end{cases}
\end{align*}
We will first show that this choice of $\Omega$ is convex. First we need a Lemma that will make it easier.
\begin{lemma}
The functional $F: \mathbb{R}^n \to \mathbb{R}$ defined as
\begin{align*}
    F(\mathbf{x}) = \sum_{i=1}^n x_i \cdot \log\bracket{\frac{x_i}{\sum_{j=1}x_j}}
\end{align*}
is convex over its domain $\mathbb{R}_{> 0}^n$.
\end{lemma}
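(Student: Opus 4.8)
The plan is to reduce convexity to positive semidefiniteness of the Hessian, exploiting that $F$ is smooth on the open convex set $\mathbb{R}_{>0}^n$. First I would rewrite $F$ in the separated form
\begin{align*}
    F(\mathbf{x}) = \sum_{i=1}^n x_i \log x_i - s \log s, \qquad s := \sum_{j=1}^n x_j,
\end{align*}
which makes the derivatives transparent. It is worth noting that the naive decomposition is unhelpful here: the first term is convex, but $s\log s$ is a convex function of the linear quantity $s$, so $-s\log s$ is concave, and convexity is therefore not inherited termwise. A genuine computation is required.

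Next I would compute $\partial_k F = \log(x_k/s)$ and then the Hessian entries $\partial_k\partial_l F = \delta_{kl}/x_k - 1/s$, so that
\begin{align*}
    H(\mathbf{x}) = \operatorname{diag}\left(\tfrac{1}{x_1},\dots,\tfrac{1}{x_n}\right) - \tfrac{1}{s}\,\mathbf{1}\mathbf{1}^{\top},
\end{align*}
where $\mathbf{1}$ denotes the all-ones vector. Since the domain is open and convex, convexity of $F$ is equivalent to $H(\mathbf{x}) \succeq 0$ for every $\mathbf{x} \in \mathbb{R}_{>0}^n$, so it suffices to check the associated quadratic form.

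The crux is verifying that this form is nonnegative. For arbitrary $\mathbf{v} \in \mathbb{R}^n$,
\begin{align*}
    \mathbf{v}^{\top} H(\mathbf{x}) \mathbf{v} = \sum_{k=1}^n \frac{v_k^2}{x_k} - \frac{1}{s}\left(\sum_{k=1}^n v_k\right)^2,
\end{align*}
and the inequality $\mathbf{v}^{\top} H(\mathbf{x})\mathbf{v} \geq 0$ is exactly Cauchy--Schwarz applied to the vectors $(v_k/\sqrt{x_k})_k$ and $(\sqrt{x_k})_k$, namely $\left(\sum_k v_k\right)^2 \leq \left(\sum_k v_k^2/x_k\right)\left(\sum_k x_k\right)$, after dividing through by $s > 0$. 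This gives $H(\mathbf{x}) \succeq 0$ and hence the claim. I expect this recognition step, that the rank-one correction $\tfrac{1}{s}\mathbf{1}\mathbf{1}^{\top}$ is dominated by the diagonal precisely via Cauchy--Schwarz, to be the only nontrivial part; the differentiation is routine.

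As a cleaner structural alternative I would avoid the Hessian entirely by writing $F(\mathbf{x}) = \sum_i p(x_i, s)$ where $p(x,y) = x\log(x/y)$ is the perspective of the convex map $t \mapsto t\log t$ (indeed $y\,\phi(x/y) = x\log(x/y)$ for $\phi(t)=t\log t$). Joint convexity of the perspective, combined with the fact that each pair $(x_i, s)$ is an affine function of $\mathbf{x}$, makes every summand convex; the sum of convex functions is then convex, recovering the result.
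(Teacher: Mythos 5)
Your primary argument coincides with the paper's own proof: the same Hessian $\operatorname{diag}\left(\tfrac{1}{x_1},\dots,\tfrac{1}{x_n}\right) - \tfrac{1}{s}\,\mathbf{1}\mathbf{1}^{\top}$ and the same Cauchy--Schwarz step $\left(\sum_k v_k\right)^2 \leq \left(\sum_k v_k^2/x_k\right)\left(\sum_k x_k\right)$ to show the quadratic form is nonnegative; both computations check out, and incidentally your $\mathbf{1}\mathbf{1}^{\top}$ is the correct outer product where the paper's proof writes $\mathbf{1}^{\top}\mathbf{1}$. Your closing alternative, however, is a genuinely different route the paper does not take: writing $F(\mathbf{x}) = \sum_i p(x_i, s)$ with $p(x,y) = y\,\phi(x/y)$ the perspective of $\phi(t) = t\log t$, joint convexity of the perspective plus affinity of $\mathbf{x} \mapsto (x_i, s)$ makes each summand convex with no differentiation at all. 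That argument buys generality and robustness: it is exactly the standard proof that relative entropy is jointly convex, it works verbatim for any convex $\phi$ in place of $t\log t$, and it does not require smoothness, whereas the Hessian computation is specific to this instance and hinges on recognizing the Cauchy--Schwarz domination of the rank-one correction. Either proof is complete on its own; the paper's choice matches your first.
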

\begin{proof}
We derive the Hessian of $F$ which can be verified to be:
\begin{align*}
    HF(\mathbf{x}) = \operatorname{diag}\bracket{\frac{1}{x_1}, \frac{1}{x_2}, \ldots, \frac{1}{x_n}}- \frac{1}{\sum_{i=1}^n x_i} \cdot \mathbf{1}^{\intercal}\mathbf{1}.
\end{align*}
Next, we have for any vector $z \in \mathbb{R}^n$ and $x \in \operatorname{dom}F$:
\begin{align*}
    z^{\intercal}HF(x)z &= z^{\intercal} \operatorname{diag}\bracket{\frac{1}{x_1}, \frac{1}{x_2}, \ldots, \frac{1}{x_n}} z - \frac{1}{\sum_{i=1}^n x_i}  \bracket{\sum_{i=1}^n z_i}^2\\
    &= \sum_{i=1}^n \frac{z_i^2}{x_i} - \frac{1}{\sum_{i=1}^n x_i}  \bracket{\sum_{i=1}^n z_i}^2\\
    &= \frac{1}{\sum_{i=1}^n x_i} \bracket{\bracket{\sum_{i=1}^n x_i} \cdot \bracket{\sum_{i=1}^n \frac{z_i^2}{x_i}} -   \bracket{\sum_{i=1}^n z_i}^2}\\
    &\geq 0,
\end{align*}
where the last inequality follows by an application of Cauchy-Schwarz inequality noting that $x \in \operatorname{Dom}F = \mathbb{R}_{> 0}^n$. Since the Hessian is positive semi-definite, it follows that $F$ is convex.
\end{proof}
First denote by $\mu_S(s) = \sum_{a \in \mathcal{A}} \mu(s,a)$ and note that $\pi_{\mu}(a \mid s) = \mu(s,a) / \mu_S(s)$. For any $\mu \in \operatorname{dom} \Omega$, we have
\begin{align*}
    \Omega(\mu) &= \E_{\mu(s,a)} \left[\operatorname{KL}(\pi_{\mu}, U)\right]\\
                &= \E_{\mu(s,a)} \left[ \sum_{a \in \mathcal{A}} \pi_{\mu}(a \mid s) \cdot \log\bracket{\pi_{\mu}(a \mid s)} + \log n \right]\\
                &= \E_{\mu_S(s)} \left[ \sum_{a \in \mathcal{A}} \pi_{\mu}(a \mid s) \cdot \log\bracket{\pi_{\mu}(a \mid s)}  \right] + \log n\\
                &=  \int_{\mathcal{S}} \sum_{a \in \mathcal{A}} \mu_S(s) \pi_{\mu}(a \mid s) \cdot \log\bracket{\pi_{\mu}(a \mid s)}  ds + \log n\\
                &=  \int_{\mathcal{S}} \sum_{a \in \mathcal{A}} \mu(s,a)\cdot \log\bracket{\frac{\mu(s,a)}{\sum_{a' \in \mathcal{A}}\mu(s,a') }}  ds + \log n,
\end{align*}
and convexity follows by the above Lemma. Before we proceed, we need to also show that $\mathcal{B}_{\times}$ is convex so that our redefining of $\Omega$ does not break convexity established above. Consider $\mu, \nu \in \mathcal{B}_{\times}$ and so there exists $\mu_S,\nu_S \in \mathscr{P}(\mathcal{S})$ and $\pi_{\mu}, \pi_{\nu} \in \mathcal{F}_b(\mathcal{S}, \mathbb{R}^n)$ with $\mu(s,a) = \pi_{\mu} (a \mid s) \cdot \mu_S(s)$ and $\nu(s,a) = \pi_{\nu}(a \mid s) \cdot \nu_S(s)$. For any $\lambda \in [0,1]$, we have (setting $P_{\mu,\nu}(s) = \frac{\mu_S(s) + \nu_S(s)}{2}$)
\begin{align*}
    \lambda \cdot \mu(s,a) + (1 - \lambda) \nu(s,a) &= \lambda \pi_{\mu} (a \mid s) \cdot \mu_S(s) + (1 - \lambda) \cdot \pi_{\nu}(a \mid s) \cdot \nu_S(s)\\
    &= P_{\mu,\nu}(s) \cdot \bracket{\lambda \pi_{\mu} (a \mid s) \cdot \frac{\mu_S(s)}{P_{\mu,\nu}(s)} + (1 - \lambda) \cdot \pi_{\nu}(a \mid s) \cdot \frac{\nu_S(s)}{P_{\mu,\nu}(s)}}.
\end{align*}
By construction, both $\mu_S$ and $\nu_S$ are absolutely continuous with respect to $P_{\mu,\nu}$ and thus the terms inside the bracket are bounded and well-defined. Moreover $P_{\mu,\nu} \in \mathscr{P}(\mathcal{S})$ and thus this element is in $\mathcal{B}_{\times}$, which concludes the convexity proof. We now proceed to derive the conjugate. For any $r' \in \mathcal{F}_b(\mathcal{X})$ we have
\begin{align*}
    \Omega^{\star}(r') &= \sup_{\mu \in \mathscr{B}(\mathcal{X})} \bracket{ \int_{\mathcal{X}} r'(s,a) d\mu(s,a) - \Omega(\mu)  }\\
    &\stackrel{(1)}{=} \sup_{\mu \in \mathcal{B}_{\times}} \bracket{ \int_{\mathcal{X}} r'(s,a) d\mu(s,a) - \Omega(\mu)  }\\
    &= \sup_{\mu \in \mathcal{B}_{\times}} \bracket{ \int_{\mathcal{X}} r'(s,a) d\mu(s,a) - \E_{\mu(s,a)}\left[\operatorname{KL}(\pi_{\mu}(\cdot \mid s),U)\right] }\\
    &= \sup_{\mu \in \mathcal{B}_{\times}} \bracket{ \int_{\mathcal{X}} \bracket{ \int_{\mathcal{A}} r'(s,a) d\pi_{\mu}(a \mid s) - \operatorname{KL}(\pi_{\mu}(\cdot \mid s),U)} d\mu(s,a)   }\\
    &= \sup_{\mu_S \in \mathscr{P}(\mathcal{S})} \sup_{\pi_{\mu}(\cdot \mid s) \in \mathcal{F}_b(\mathcal{S}, \mathbb{R}^n)} \bracket{ \int_{\mathcal{X}} \bracket{ \int_{\mathcal{A}} r'(s,a) d\pi_{\mu}(a \mid s) - \operatorname{KL}(\pi_{\mu}(\cdot \mid s),U)} d\mu_S(s)   }\\
    &\stackrel{(2)}{=} \sup_{\mu_S \in \mathscr{P}(\mathcal{S})} \int_{\mathcal{X}}  \sup_{\pi_{\mu} \in \mathbb{R}^n }\bracket{  \int_{\mathcal{A}}  r'(s,a) d\pi_{\mu}(a) - \operatorname{KL}(\pi_{\mu},U)} d\mu_S(s) \\
    &\stackrel{(3)}{=} \sup_{\mu_S \in \mathscr{P}(\mathcal{S})} \int_{\mathcal{X}}  \sup_{\pi_{\mu} \in \mathscr{P}(\mathcal{A}) }\bracket{  \int_{\mathcal{A}}  r'(s,a) d\pi_{\mu}(a) - \operatorname{KL}(\pi_{\mu},U)} d\mu_S(s) \\
    &\stackrel{(4)}{=} \sup_{\mu_S \in \mathscr{P}(\mathcal{S})} \int_{\mathcal{X}} \exp\bracket{r'(s,a)} dU(a) - 1\\
    &\stackrel{(5)}{=} \sup_{s \in \mathcal{S}} \int_{\mathcal{X}} \exp\bracket{r'(s,a)} dU(a) - 1, 
\end{align*}
where $(1)$ holds since $\operatorname{dom} \Omega \subseteq \mathcal{B}_{\times}$. $(2)$ holds from \citep[Theorem~14.60, p. 677]{rockafellar2009variational} using the fact that $\mathcal{F}_b(\mathcal{S},\mathbb{R}^n)$ is trivially a decomposable space in definition \citep[Definition~14.59, p. 676]{rockafellar2009variational}. $(3)$ holds since $\operatorname{dom}\bracket{\operatorname{KL}(\cdot, U)} \subseteq \mathscr{P}(\mathcal{A}) \subset \mathbb{R}^n$. $(4)$ is due to \citep[Proposition~5]{feydy2019interpolating} and $(5)$ follows by noting that the optimal $\mu_S$ is concentrated around the supremum.
\subsection{Imitation Learning}
\subsubsection{$f$-divergence}
Note that for any $r \in \mathcal{F}_b(\mathcal{X})$ we have
\begin{align*}
    (-R)^{\star}(r) &= \sup_{\nu \in \mathscr{B}(\mathcal{X})} \bracket{\int_{\mathcal{X}}r(x) d\nu(x) + R(\nu) }\\
    &= \sup_{\nu \in \mathscr{B}(\mathcal{X})} \bracket{\int_{\mathcal{X}}r(x) d\nu(x) - \operatorname{KL}(\nu,\mu_E}\\
    &\stackrel{(1)}{=} \int_{\mathcal{X}} r(x) d\mu_E(x) - 1,
\end{align*}
where $(1)$ holds due to \cite[Proposition~5]{feydy2019interpolating}. We will now show that $(-R)^{\star}$ is increasing for any $R(\mu) = -D_f(\mu,\mu_E)$ where $D_f$ is an $f$-divergence. First let
\begin{align*}
    \nu \in \argsup_{\mu \in \mathscr{P}(\mathcal{X})} \bracket{ \int_{\mathcal{X}} r(x) d\mu(x) + R(\mu)},
\end{align*}
noting that $\nu$ exists since the mapping $\mu \mapsto \int_{\mathcal{X}} r(x) d\mu(x) + R(\mu)$ is concave, upper semicontinuous and $\mathscr{P}(\mathcal{X})$ is compact. For any $r' \geq r$
\begin{align*}
    &(-R_{-})^{\star}(r) - (-R_{-})^{\star}(r')\\ &= \sup_{\mu \in \mathscr{B}(\mathcal{X})} \bracket{ \int_{\mathcal{X}} r(x) d\mu(x) + R(\mu)} -  \sup_{\mu \in \mathscr{B}(\mathcal{X})} \bracket{ \int_{\mathcal{X}} r'(x) d\mu(x) + R(\mu)} \\&\stackrel{(1)}{=} \sup_{\mu \in \mathscr{P}(\mathcal{X})} \bracket{ \int_{\mathcal{X}} r(x) d\mu(x) + R(\mu)} -  \sup_{\mu \in \mathscr{P}(\mathcal{X})} \bracket{ \int_{\mathcal{X}} r'(x) d\mu(x) + R(\mu)}\\
    &\leq \int_{\mathcal{X}} r(x) d\nu(x) + R(\nu) - \int_{\mathcal{X}} r'(x) d\nu(x) - R(\nu)\\
    &= \int_{\mathcal{X}} \bracket{r(x) - r'(x)} d\nu(x)\\
    &\leq 0,
\end{align*}
where $(1)$ holds due to the fact that $\operatorname{dom}\bracket{D_f(\cdot,\mu_E)} \subseteq \mathscr{P}(\mathcal{X})$.
\subsubsection{InfoGAIL}
In this case, we exploit the fact that $-R(\mu)$ takes the form of an Integral Probability Metric between $\mu$ and $\mu_E$. Let $\mathcal{H}_L$ the set of functions that are $L$-Lipschitz with respect to $d$. For any $r \in \mathcal{F}_b(\mathcal{X})$ we have
\begin{align*}
    (-R)^{\star}(r) &= \sup_{\nu \in \mathscr{B}(\mathcal{X})} \bracket{ \int_{\mathcal{X}} r(x) d\nu(x) - \sup_{h: \operatorname{Lip}_d(h) \leq L}\bracket{\int_{\mathcal{X}} h(x) d\nu(x) - \int_{\mathcal{X}} h(x) d\mu_E(x) }}\\
    &\stackrel{(1)}{=} \int_{\mathcal{X}} r(x) d\mu_E(x) + \iota_{\mathcal{H}_L}(r), \label{infogail-conjugate}
\end{align*}
where $(1)$ is due to \citep[Lemma~5]{husain2020distributional}. Thus, it holds that
\begin{align*}
    \sup_{\mu \in \mathcal{K}_{P,\gamma}} R(\mu) &= \inf_{r' \in \mathcal{F}_b(\mathcal{X})} \bracket{\operatorname{RL}_{P,\gamma}(r') + \int_{\mathcal{X}} -r'(x) d\mu_E(x) + \iota_{\mathcal{H}_L}(-r')}\\
    &\stackrel{(2)}{=} \inf_{r' \in \mathcal{F}_b(\mathcal{X})} \bracket{\operatorname{RL}_{P,\gamma}(r') - \int_{\mathcal{X}} r'(x) d\mu_E(x) + \iota_{\mathcal{H}_L}(r')}\\
    &= \inf_{r' : \operatorname{Lip}_d \leq L} \bracket{\operatorname{RL}_{P,\gamma}(r')- \int_{\mathcal{X}} r'(x) d\mu_E(x) } ,
\end{align*}
where $(2)$ holds since $\operatorname{Lip}_d(-r) = \operatorname{Lip}_d(r)$. We now show that adding an entropy term to 
\begin{align}
    R(\mu) =- \sup_{h : \operatorname{Lip}_d(h) \leq L} \bracket{\int_{\mathcal{X}}h(x)d\mu(x) - \int_{\mathcal{X}} h(x) d\mu_E(x) } - \epsilon \E_{\mu(s,a)}\left[\operatorname{KL}(\pi_{\mu}(\cdot \mid s), U_{A}) \right]
\end{align}
will ensure that $(-R)^{\star}$ is increasing. Using standard results from \citep{penot2012calculus} that the conjugate of the sum of two functions is the infimal convolution between their conjugates mean we will convolve both \eqref{infogail-conjugate} and entropy conjugate from Lemma 2 of the main file.:
\begin{align}
    (-R)^{\star}(r') &= \inf_{r \in \mathcal{F}_b(\mathcal{X})} \bracket{ \sup_{s \in \mathcal{S}} \int_{\mathcal{X}} \exp\bracket{r'(s,a) - r(s,a)} dU(a) + \int_{\mathcal{X}} r d\mu_E + \iota_{\mathcal{H}_L}(r) }\\
    &= \inf_{r \in \mathcal{H}_{L}} \bracket{ \sup_{s \in \mathcal{S}} \int_{\mathcal{X}} \exp\bracket{r'(s,a) - r(s,a)} dU(a) + \int_{\mathcal{X}} r d\mu_E}.
\end{align}
Let $r'' \leq r'$ pointwise and define
\begin{align}
    r^{\ast} \in \arginf_{r \in \mathcal{H}_{L}} \bracket{ \sup_{s \in \mathcal{S}} \int_{\mathcal{X}} \exp\bracket{r'(s,a) - r(s,a)} dU(a) + \int_{\mathcal{X}} r d\mu_E},
\end{align}
noting that since exists due to Weierstrass Theorem since $\mathcal{H}_L$ is compact and the mapping inside is convex and lower semicontinuous. Next, we have
\begin{align}
    &(-R)^{\star}(r'') - (-R)^{\star}(r')\\ &= \inf_{r \in \mathcal{H}_{L}} \bracket{ \sup_{s \in \mathcal{S}} \int_{\mathcal{X}} \exp\bracket{r''(s,a) - r(s,a)} dU(a) + \int_{\mathcal{X}} r d\mu_E}\\ &- \inf_{r \in \mathcal{H}_{L}} \bracket{ \sup_{s \in \mathcal{S}} \int_{\mathcal{X}} \exp\bracket{r'(s,a) - r(s,a)} dU(a) + \int_{\mathcal{X}} r d\mu_E}\\
    &\leq \sup_{s \in \mathcal{S}} \int_{\mathcal{X}} \exp\bracket{r''(s,a) - r^{\ast}(s,a)} dU(a) + \int_{\mathcal{X}} r^{\ast} d\mu_E\\ &- \sup_{s \in \mathcal{S}} \int_{\mathcal{X}} \exp\bracket{r'(s,a) - r^{\ast}(s,a)} dU(a) - \int_{\mathcal{X}} r^{\ast} d\mu_E\\
    &= \sup_{s \in \mathcal{S}} \int_{\mathcal{X}} \exp\bracket{r''(s,a) - r^{\ast}(s,a)} dU(a) - \sup_{s \in \mathcal{S}} \int_{\mathcal{X}} \exp\bracket{r'(s,a) - r^{\ast}(s,a)} dU(a)\\
    &\leq 0,
\end{align}
where the last inequality follows from the fact that $r'' \leq r'$ and thus this proves that $(-R)^{\star}$ is increasing.

\subsection{Entropic Exploration}
For any $r \in \mathcal{F}_b(\mathcal{X})$
\begin{align*}
    (-R)^{\star}(r) &= \sup_{\mu \in \mathscr{B}(\mathcal{X})} \bracket{\int_{\mathcal{X}}r(x) d\mu(x) - \operatorname{KL}(\mu,U_{\mathcal{X}}) }\\
    &\stackrel{(1)}{=} \int_{\mathcal{X}} \exp\bracket{r(x)} dU_{\mathcal{X}}(x) - 1,
\end{align*}
where $(1)$ follows from \citep[Proposition~5]{feydy2019interpolating}.

\end{document}